\DeclareMathOperator{\trace}{tr}
\newcommand{\Transp}{\mathsf{T}}
\newcommand{\pn}{v} 
\newcommand{\mn}{e} 
\newtheorem{lemma}{Lemma}
\newtheorem{assumption}{Assumption}
\date{\today}
\title{Gaussian Variational State Estimation for Nonlinear State-Space Models\thanks{This work has been submitted to the IEEE for possible publication. Copyright may be transferred without notice, after which this version may no longer be accessible.}}
\author[1]{Jarrad Courts\thanks{\url{Jarrad.Courts@uon.edu.au}}}
\author[2]{Adrian Wills\thanks{\url{Adrian.Wills@newcastle.edu.au}}}
\author[3]{Thomas B. Sch\"on\thanks{\url{thomas.schon@it.uu.se}}, \thanks{This research was financially supported by \emph{Kjell och M{\"a}rta Beijer Foundation}, and via the projects \emph{Learning flexible models for nonlinear dynamics} (contract number: 2017-03807), and \emph{NewLEADS - New Directions in Learning Dynamical Systems} (contract number: 621-2016-06079), both funded by the Swedish Research Council.}}
\affil[1]{School of Engineering, University of Newcastle, Australia}
\affil[2]{School of Engineering, University of Newcastle, Australia}
\affil[3]{Department of Information Technology, Uppsala University, Sweden}
\begin{document}

\maketitle


\begin{abstract}
  In this paper, the problem of state estimation, in the context of both filtering and smoothing, for nonlinear state-space models is considered. Due to the nonlinear nature of the models, the state estimation problem is generally intractable as it involves integrals of general nonlinear functions and the filtered and smoothed state distributions lack closed-form solutions. As such, it is common to approximate the state estimation problem. In this paper, we develop an assumed Gaussian solution based on variational inference, which offers the key advantage of a flexible, but principled, mechanism for approximating the required distributions. Our main contribution lies in a new formulation of the state estimation problem as an optimisation problem, which can then be solved using standard optimisation routines that employ exact first- and second-order derivatives. The resulting state estimation approach involves a minimal number of assumptions and applies directly to nonlinear systems with both Gaussian and non-Gaussian probabilistic models. The performance of our approach is demonstrated on several examples; a challenging scalar system, a model of a simple robotic system, and a target tracking problem using a von Mises-Fisher distribution and outperforms alternative assumed Gaussian approaches to state estimation.  
\end{abstract}


\section{Introduction}
	The problem of state estimation is of great interest to a wide variety of practical scientific and engineering problems \cite{Jazwinski1970,Saerkkae2013}. The essential problem arises in situations where state-space models have been developed to describe the dynamic behaviour of a system and the full state is not directly measured. Instead, other related measurements are available, which are called the system outputs. In this situation, it is important to describe both the time evolution of the state (the state transition model), and the relationship between the state and available measurements (the measurement model). 

	In this paper, these two models are represented as conditional probability distributions given by
	\begin{subequations} \label{SE:eq:general nonlinear model}
		\begin{align}
			x_{k+1} &\sim p_\theta\left(x_{k+1} \mid x_k \right), \\
			y_k &\sim p_\theta\left(y_k \mid x_k\right),
		\end{align}
	\end{subequations}
	respectively. In the above, \(\theta \in \mathcal{R}^{n_\theta}\) is a vector of model parameters, \( x_k \in \mathcal{R}^{n_x} \) is the state at time \(k\), and \(y_k \in \mathcal{R}^{n_y}\) is the observed measurement at time \(k\). Throughout this paper, it is assumed that a value for the parameter \(\theta\) is provided. Whilst we do consider the presence of an input \(u_k \in \mathcal{R}^{n_u}\) throughout this paper, it does not influence the approach; as such, the notation has been suppressed for clarity. 

	Given a sequence of \(T >0\) measurements denoted as \(y_{1:T} = \{y_1, \dots. y_T\}\) and a prior distribution \(p\left(x_0\right)\) on the initial state \( x_0 \), we are concerned with both the filtering problem of obtaining \(p_\theta\left(x_k \mid y_{1:k}\right)\), and the smoothing problem of obtaining each pairwise smoothed distribution, \(p_\theta\left(x_{k-1},x_{k} \mid y_{1:T}\right)\) for \(k \in 1, \dots, T\). 

	In principle, both of these state estimation problems can be solved recursively with the use of Bayes' rule and the law of total probability to describe each state as a probability density function \cite{Jazwinski1970}. For example, the filtering distribution is provided by the following recursions
	\begin{subequations}
		\label{eq:filtEqns}
		\begin{align}
			p_\theta(x_{k} \mid y_{1:k}) &= \frac{p_\theta(y_k \mid x_k)\, p_\theta(x_k \mid y_{1:k-1})}{p_\theta(y_k \mid y_{1:k-1})},\label{eq:1}\\
			p_\theta(x_{k+1} \mid y_{1:k}) &= \int p_\theta(x_{k+1} \mid x_k) \, p_\theta(x_k \mid y_{1:k}) \textnormal{d} x_k.\label{eq:2}
		\end{align}
	\end{subequations}
	There are two essential problems with computing this recursion. The first is that both \eqref{eq:1} and \eqref{eq:2} have no known closed-form solutions for general nonlinear models in the form of \eqref{SE:eq:general nonlinear model}. The second is that \eqref{eq:2} involves multi-dimensional integration, which even for modest state dimensions will be intractable to solve with high accuracy \cite{Gordon1993}. The state smoothing problem recursions face the same difficulties.

	These challenges have motivated a large body of research directed towards approximating the filtered and smoothed distributions, and the pertinent research is outlined in Section~\ref{SE:sec:related works}.

	\textbf{Contributions:} Relative to the existing work, the main contribution of this paper is to present a variational inference (VI) \cite{Jordan1999,Blei2017} approach to state estimation for nonlinear discrete-time state-space models. In general, the VI approach aims to approximate posterior distributions with a parametric density of an assumed form; in this paper, the densities are assumed Gaussian. Within the current context, this can be employed to approximate the desired state filtering and smoothing posteriors. The key lies in carefully tayloring the parametric form of the assumed posterior distributions, not relying on the classic mean-field approximation.

	The primary mechanism of VI is to minimise the Kullback-Leiber (KL) divergence \cite{Kullback1951} between the assumed posterior distribution and the intractable posterior. Hence, the inference problem is cast as an optimisation problem \cite{Blei2017}.

	Differing from existing work, the presented approach relies upon standard optimisation routines that employ exact gradients and exact Hessians to address both the filtering and smoothing problems; due to the selected parametrisation, this can be efficiently performed. Additionally, the state estimation approach uses a minimal number of assumptions and is directly applicable to both Gaussian and non-Gaussian nonlinear systems. The performance and robustness of the proposed method is demonstrated on several examples. We have partially presented the approach detailed in the unpublished work of \cite{Courts2020}.

	The remainder of this paper is organised as follows; Section~\ref{SE:sec:related works} provides a brief overview of related work followed by Section~\ref{SE:sec:main part} detailing the proposed state estimation approach for both filtering and smoothing. Section~\ref{SE:sec:implmentation} then provides implementation-specific details followed by numerical examples in Section~\ref{SE:sec:examples}. Section~\ref{SE:sec:conculsion} concludes the paper.


\section{Related Work}\label{SE:sec:related works}
	Particle filters \citep{Gordon1993} and particle smoothers \citep{Doucet2000} can directly handle both the filtering and smoothing problems for nonlinear models. Fundamentally, these methods rely on the Law of Large Numbers for approximating integrals with finite sums. This approximation converges to the correct solution aymptotically as the number of samples tends to infinity. As such, these particle methods can be computationally expensive, even for modest state dimensions. For this reason, alternative approaches have been widely explored and the majority of these alternatives rely on an assumed posterior density, either impicitly or explicitly. 

	Amongst the most popular assumed density approaches are the extended Kalman filter (EKF) \citep{Jazwinski1970}, the unscented Kalman filter (UKF) \citep{Julier1997}, and the respective smoothers, the extended Kalman smoother (EKS) \citep{Cox1964}, and the unscented Rauch-Tung-Striebel smoother (URTSS) \citep{Saerkkae2008}. These approaches assume a Gaussian density and are based on the Kalman filter \citep{Kalman1961} and Rauch-Tung-Striebel (RTS) \citep{Rauch1965} smoother, which provide exact solutions for linear Gaussian systems.

	These approaches function by approximating the nonlinear models about the state prior \citep{GarciaFernandez2015}. Iteratively recomputing this approximation using the posterior distribution, however, can improve the performance \citep{GarciaFernandez2015}. Many iterated EKF and UKF based approaches exist; the iterated posterior linearization filter (IPLF) \citep{GarciaFernandez2015} and the iterated UKF approaches in \citep{Skoglund2019} are two examples. The iterated posterior linearization smoother (IPLS) \citep{GarciaFernandez2017}  extends this approach to the smoothing problem and is generalised in \citep{Tronarp2018} to allow non-Gaussian models. 

	All these approaches, however, are based upon a Kalman filtering framework \citep{GarciaFernandez2015}. This limits the form of the state corrections and, for nonlinear models, can be outperformed using alternative approaches \citep{Darling2017}. The minimum divergence filter (MDF) \citep{Darling2017} is one such approach derived from a KL divergence perspective. The MDF, however, is noniterative and can perform poorly with small measurement noise, which can require the artificial introduction of noise \citep{Darling2017}. The stochastic search Kalman filter (SKF) \citep{Gultekin2017} is another assumed Gaussian filter based on minimising a KL divergence, in this case, by using a Monte Carlo search. The SKF, however, was outperformed by moment matching.

	This KL divergence basis is also known as variational inference. VI is also used in \citep{Wang2019} to provide a variational iterated filter (VIF) and in \citep{Vrettas2011,Vrettas2015,Vrettas2008,AlaLuhtala2015} to smooth stochastic differential equations. More recent work includes exactly sparse Gaussian variational inference (ESGVI) \citep{Barfoot2020}, a batch state estimation method. Intended for nonlinear systems, ESGVI focuses on simultaneous localisation and mapping (SLAM), derivative-free methods, and the derivation of an optimiser based on approximations to first- and, some, second-order derivatives.

	A key difference between these related works and the approach proposed in this paper is \textit{where} and \textit{when} approximations are handled and introduced. In this paper, only one assumption (selected density form) and one approximation (Gaussian quadrature of integrals) are introduced, and only when required to produce tractable optimisation problems. The resulting optimisation problems are of a standard form and they can directly solved without further simplifications using exact first- and second-order derivatives to a local maximum. Due to the careful parametrisation of the problem, these derivatives are readily available. 

	Contrary to the developed approach, many related works, particularly the Kalman based approaches, implicitly introduce many inherited assumptions not necessarily appropriate or justified for nonlinear systems. A further contrast is when and how the iterations and optimisation are performed for both the Kalman and VI-based approaches. Generally speaking, the approaches are `ad hoc', incorporating some ideas from Newton-style optimisation; but they do not necessarily form efficient and robust procedures such as those detailed in the optimisation literature, for example, \cite{Nocedal2006}. As such, convergence difficulties or even divergence may occur, a problem observed in many of the iterated approaches.

	Furthermore, these alternative approaches typically introduce additional simplifications or approximations when addressing the resulting optimisation problems. This often occurs in the calculation of derivatives, where approximations to the gradient are frequent. Using gradient approximations limits the range of optimisation routines appropriate to use as many assume exact gradients \citep{Nocedal2006}. 

	Finally, the use of second-order derivatives is infrequent addressed by the existing approaches. When considered, second-order derivatives are typically approximated, and only for the components relating to the mean, and not the covariance, of the assumed distribution. While standard optimisation routines accommodate using approximate Hessians, this represents a difference from the approach we develop, which allows for the exact Hessian to be efficiently obtained and used in a way that allows for the use of standard optimisation routines.


\section{Variational Nonlinear State Estimation} \label{SE:sec:main part}
	In this section, we first develop the proposed approach to state estimation in the context of smoothing in Section~\ref{SE:sec:smoothing}. The filtering problem is then examined in Section~\ref{SE:sec:filtering}, which is followed by an illustrative example and discussion in Section~\ref{SE:sec:discussion}.


\subsection{Smoothing} \label{SE:sec:smoothing}
	As \( p_\theta(x_{0:T} \mid y_{1:T}) \) cannot be represented in closed-form it will be approximated using an assumed density, parameterised by \(\beta\), and denoted as \(q_\beta(x_{0:T})\). At a high level, assumed density smoothing consist of two steps; firstly, selecting a tractable parametric form, and secondly, for a given parametric form, obtaining a numeric value for \(\beta\). For a given parametric form, approaches to assumed density smoothing differ by how the value of \(\beta\) is obtained. In this paper, the value of \(\beta\) is obtained using variational inference.
	
	Our approach to obtaining values for \(\beta\) begins by utilising conditional probability to express the log-likelihood, \(\log p_\theta(y_{1:T})\), as
	\begin{align} \label{SE:eq:log_cond_prob}
		\log p_\theta(y_{1:T}) &= \log p_\theta(x_{0:T}, y_{1:T}) - \log p_\theta(x_{0:T} \mid y_{1:T} ),
	\end{align}
	independent of \(q_\beta(x_{0:T})\). The assumed density is introduced by adding and subtracting \( \log q_\beta(x_{0:T}) \) to the right-hand side of \eqref{SE:eq:log_cond_prob}. With minor rearrangement this leads to 
	\begin{align} \label{SE:eq:LL_sum}
		\log p_\theta(y_{1:T}) &= \log \frac{ p_\theta(x_{0:T}, y_{1:T})  }{ q_\beta(x_{0:T}) } + \log \frac{ q_\beta(x_{0:T}) }{ p_\theta(x_{0:T} \mid y_{1:T} )}.
	\end{align}
	As \(\log p_\theta(y_{1:T}) \) is independent of \(x_{0:T}\), we can write
	\begin{align} \label{SE:eq:LL_independence}
		\log p_\theta(y_{1:T}) = \int  q_\beta(x_{0:T}) \log p_\theta(y_{1:T}) dx_{0:T}.
	\end{align}
	By substitution of \eqref{SE:eq:LL_sum} into the right-hand side of \eqref{SE:eq:LL_independence}, we arrive at
	\begin{align}
		\log p_\theta(y_{1:T}) = &\int q_\beta(x_{0:T}) \log \frac{ p_\theta(x_{0:T}, y_{1:T})  }{ q_\beta(x_{0:T}) } dx_{0:T} \notag \\
								 &\quad+ \int  q_\beta(x_{0:T}) \log \frac{ q_\beta(x_{0:T}) }{ p_\theta(x_{0:T} \mid y_{1:T} )} dx_{0:T}, 
	\end{align}
	which is equivalent to
	\begin{align} \label{SE:eq:ll_equal_L_plus_KL}
		\log p_\theta(y_{1:T}) = \mathcal{L}\left(\beta\right) + \text{KL}[ q_\beta(x_{0:T})  \mid\mid p_\theta(x_{0:T} \mid y_{1:T} )],
	\end{align}
	where \( \text{KL}[ q_\beta(x_{0:T})  \mid\mid p_\theta(x_{0:T} \mid y_{1:T} )] \) is the KL divergence of \( p_\theta(x_{0:T} \mid y_{1:T} )\) from \( q_\beta(x_{0:T}) \) and
	\begin{align}
		\mathcal{L}\left(\beta\right) = \int q_\beta(x_{0:T}) \log \frac{ p_\theta(x_{0:T}, y_{1:T})  }{ q_\beta(x_{0:T}) } dx_{0:T}.
	\end{align}
	As \( \text{KL}[ q_\beta(x_{0:T})  \mid\mid p_\theta(x_{0:T} \mid y_{1:T} )] \geq 0\), from \eqref{SE:eq:ll_equal_L_plus_KL}, it is seen that
	\begin{align}
		\log p_\theta(y_{1:T}) \geq \mathcal{L}\left(\beta\right),
	\end{align}
	and hence \(\mathcal{L}\left(\beta\right) \) is a lower bound to the log-likelihood, \(\log p_\theta(y_{1:T})\).
	
	The proposed approach to smoothing consists of selecting a value for \(\beta\), denoted \(\beta^\star\), by maximising this lower bound to the log-likelihood via
	\begin{align} \label{SE:eq:highest_level_optim}
		\beta^\star &= \arg\max_\beta \quad \mathcal{L}(\beta).
	\end{align}
	The resulting smoothed density is then given by \(  q_{\beta^\star}(x_{0:T}) \).
	
	The use of \eqref{SE:eq:highest_level_optim} to obtain numerical values of \(\beta\) poses several appealing properties. Firstly, no representation of  \(p_\theta(x_{0:T} \mid y_{1:T}) \) is required; this avoids one of the problems associated with the direct use of Bayes' rule to perform smoothing. 
	
	Secondly, \emph{if} the parametric form of \(q_\beta(x_{0:T})\) matches \(p_\theta(x_{0:T} \mid y_{1:T}) \) the KL divergence term can be driven to zero by the maximisation. This means \(  q_{\beta^\star}(x_{0:T}) \) is identical to \(p_\theta(x_{0:T} \mid y_{1:T}) \) and smoothing has been performed exactly. 
	
	Thirdly, from \eqref{SE:eq:ll_equal_L_plus_KL} and due to both the nonnegativity of any KL divergence and \(\log p_\theta(y_{1:T}) \) being constant, maximisation of \(\mathcal{L}\left(\beta\right)\) is equivalent to minimisation of \(\text{KL}[ q_\beta(x_{0:T})  \mid\mid p_\theta(x_{0:T} \mid y_{1:T} )]\). This is the motivation behind maximising \(\mathcal{L}\left(\beta\right)\), it allows the KL divergence of \( p_\theta(x_{0:T} \mid y_{1:T} )\) from \( q_\beta(x_{0:T}) \) to be minimised without requiring \( p_\theta(x_{0:T} \mid y_{1:T} )\).	
	
	Despite these appealing properties performing estimation in this way, i.e. minimising \(\text{KL}[ q_\beta(x_{0:T})  \mid\mid p_\theta(x_{0:T} \mid y_{1:T} )]\), is not without drawbacks. For example, it is well-known that when \(q_\beta(x_{0:T})\) cannot adequately represent \(p_\theta(x_{0:T} \mid y_{1:T} )\), mode seeking behaviour and over-certainty can occur \citep{Bishop2006,Barber2012}. Among other situations, this behaviour can occur when \(q_\beta(x_{0:T})\) is selected to belong to an uni-modal family while \(p_\theta(x_{0:T} \mid y_{1:T} )\) is multi-modal and can be a limitation of the approach.	
	
	This overall approach of approximating an intractable inference problem with an optimisation problem is known as variational inference. The VI approach has been utilised in many fields, and a particularly prominent example is the field of machine learning \cite{Zhang2019}.
	
	While \eqref{SE:eq:highest_level_optim} avoids \(p_\theta(x_{0:T} \mid y_{1:T}) \) it remains generally intractable. One reason is that \( q_\beta(x_{0:T}) \) is an assumed density to the full posterior distribution; with an increasing number of time steps, calculations involving the full posterior distribution \( p_\theta(x_{0:T} \mid y_{1:T} )\), or approximations thereof,  will become computationally prohibitive \citep{Saerkkae2013}. However, the smoothing problem this paper considers is obtaining assumed density approximations \( q_\beta(x_{k-1},x_{k}) \), to each pairwise joint smoothed distribution \(p_\theta\left(x_{k-1},x_{k} \mid y_{1:T}\right)\), and not to obtain \( q_\beta(x_{0:T}) \).
	
	To utilise the approach given by \eqref{SE:eq:highest_level_optim} it is required to express \( \mathcal{L}\left(\beta\right)\) as a function of each \( q_\beta(x_{k-1},x_{k}) \) rather than  \( q_\beta(x_{0:T}) \). To achieve this we begin by separating \( \mathcal{L}\left(\beta\right)\) as
	\begin{subequations}
		\begin{align}
			\mathcal{L}\left(\beta\right) &= \int q_\beta(x_{0:T}) \log \frac{ p_\theta(x_{0:T}, y_{1:T})  }{ q_\beta(x_{0:T}) } dx_{0:T} \\
										  &= \int q_\beta(x_{0:T}) \log p_\theta(x_{0:T}, y_{1:T})  dx_{0:T} \notag \\
										  &\quad - \int q_\beta(x_{0:T}) \log q_\beta(x_{0:T}) dx_{0:T}. \label{SE:eq:full_L}
		\end{align}
	\end{subequations}
	From conditional probability, and the conditional independence of the measurements, we can write
	\begin{align}
		p_\theta(x_{0:T}, y_{1:T}) &= p(x_0) \prod_{k=1}^{T} p_\theta(x_{k}, y_{k} \mid x_{k-1}).
	\end{align}
	This allows \(\log p_\theta(x_{0:T}, y_{1:T})\) to be written as
	\begin{align*}
		\log p_\theta(x_{0:T}, y_{1:T}) &= \log \left(  p(x_0) \prod_{k=1}^{T} p_\theta(x_{k}, y_{k} \mid x_{k-1}) \right) \\
		&= \log p(x_0) + \log \prod_{k=1}^{T} p_\theta(x_{k}, y_{k} \mid x_{k-1})          \\
		&= \log p(x_0) + \sum_{k=1}^{T} \log p_\theta(x_{k}, y_{k} \mid x_{k-1}),
	\end{align*}
	and hence the first integral in \eqref{SE:eq:full_L} can be expressed as
	\begin{align}
		\int q_\beta(x_{0:T}) \log p_\theta(x_{0:T}, y_{1:T})  dx_{0:T} = I_1\left(\beta\right) + I_{23}\left(\beta\right),
	\end{align}
	where
	\begin{subequations}
		\begin{align}
			I_1\left(\beta\right) 		&= \int q_\beta(x_{0:T}) \log p(x_0)  dx_{0:T}, \\
			I_{23}\left(\beta\right) 	&= \int q_\beta(x_{0:T})  \sum_{k=1}^{T} \log p_\theta(x_{k}, y_{k} \mid x_{k-1})  dx_{0:T}  \\
										&= \sum_{k=1}^{T} \int q_\beta(x_{0:T})  \log p_\theta(x_{k}, y_{k} \mid x_{k-1})  dx_{0:T}  \\
										&= I_2\left(\beta\right) + I_3\left(\beta\right),
		\end{align}
		and
		\begin{align}
			I_2\left(\beta\right)  &= \sum_{k=1}^{T} \int q_\beta(x_{0:T}) \log p_\theta(x_{k} \mid x_{k-1})  dx_{0:T}, \\
			I_3\left(\beta\right)  &= \sum_{k=1}^{T} \int q_\beta(x_{0:T}) \log p_\theta(y_{k} \mid x_{k})  dx_{0:T}.
		\end{align}
	\end{subequations}
	Due to the fact that \( p_\theta(x_{k} \mid x_{k-1}) \) and \( p_\theta(y_{k} \mid x_{k}) \) only depend upon a small subset of \(x_{0:T}\) and since \(q_\beta(x_{0:T})\) is a probability distribution, the dimension of these integrals can be reduced to
	\begin{subequations}
		\begin{align}
			I_1\left(\beta\right)  &= \int q_\beta(x_0) \log p(x_0)  dx_0, \\
			I_2\left(\beta\right)  &= \sum_{k=1}^{T} \int q_\beta(x_{k-1:k}) \log p_\theta(x_{k} \mid x_{k-1})  dx_{k-1:k}, \\
			I_3\left(\beta\right)  &= \sum_{k=1}^{T} \int q_\beta(x_{k}) \log p_\theta(y_{k} \mid x_{k})  dx_{k}.
		\end{align}
	\end{subequations}

	To address the second integral in \eqref{SE:eq:full_L} in a pairwise fashion the following assumption is introduced.
	\begin{assumption} \label{SE:ass:suffienct flexibity}
		The assumed density \(q_\beta\left(x_{0:T}\right)\) is selected such that \(\beta\) can be divided into \(\{\beta_1,\beta_2\}\), where \(\beta_2\) does not influence each pairwise marginal distribution of \(q_\beta\left(x_{0:T}\right)\), i.e. \(q_\beta\left(x_{k-1},x_k\right) = q_{\beta_1}\left(x_{k-1},x_k\right)\), and that, for a given \(\beta_1\), a \(\beta_2\) exists such that
		\begin{align}
			q_{\beta_1,\beta_2}(x_{k} \mid x_{k-1}, x_{0:k-2}) =  q_{\beta_1,\beta_2}(x_{k} \mid x_{k-1}).
		\end{align}
	\end{assumption}
	\begin{lemma}  \label{SE:lem:optimal_q_factorises}
		Under Assumption~\ref{SE:ass:suffienct flexibity}, the optimal full assumed density \(q_{\beta^\star}\left(x_{0:T}\right)\)	factorises according to	
		\begin{align} \label{SE:eq:conditional_factorisation}
			q_{\beta^\star}\left(x_{0:T}\right) = q_{\beta^\star}(x_0) \prod_{k=1}^{T} q_{\beta^\star}(x_{k} \mid x_{k-1}).
		\end{align}
	\end{lemma}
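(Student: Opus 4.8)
The plan is to exploit the structure of $\mathcal{L}(\beta)$ established in the decomposition \eqref{SE:eq:full_L}, which, combined with the splitting of the first integral, gives
\begin{align}
\mathcal{L}(\beta) = I_1(\beta) + I_2(\beta) + I_3(\beta) - \int q_\beta(x_{0:T}) \log q_\beta(x_{0:T})\, dx_{0:T}.
\end{align}
The three integral terms $I_1, I_2, I_3$ were shown above to depend on $q_\beta$ only through its singleton and consecutive-pairwise marginals $q_\beta(x_0)$, $q_\beta(x_{k-1},x_k)$ and $q_\beta(x_k)$. By Assumption~\ref{SE:ass:suffienct flexibity} these marginals are functions of $\beta_1$ alone, so $I_1 + I_2 + I_3$ is independent of $\beta_2$. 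Consequently, for any fixed $\beta_1$, maximising $\mathcal{L}$ over $\beta_2$ is equivalent to maximising the differential entropy $-\int q_\beta \log q_\beta\, dx_{0:T}$ while the pairwise marginals are held at their $\beta_1$-determined values.

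First I would identify the candidate maximiser. For the fixed marginals $\{q_{\beta_1}(x_{k-1},x_k)\}$, define the Markov chain
\begin{align}
q^{\mathrm{MC}}(x_{0:T}) = q_{\beta_1}(x_0) \prod_{k=1}^{T} q_{\beta_1}(x_k \mid x_{k-1}),
\end{align}
which, as a direct marginalisation check confirms, reproduces exactly these consecutive-pairwise marginals. The second part of Assumption~\ref{SE:ass:suffienct flexibity} guarantees that, for the given $\beta_1$, some $\beta_2$ realises precisely this conditional-independence structure, so $q^{\mathrm{MC}}$ lies inside the assumed family and is attainable as a feasible $q_\beta$.

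The core step is to show $q^{\mathrm{MC}}$ is the entropy maximiser, which I would do via the non-negativity of the KL divergence. Since $\log q^{\mathrm{MC}}$ is a sum of terms each depending on at most one consecutive pair $(x_{k-1},x_k)$, and any competitor $q_\beta$ with the same fixed marginals agrees with $q^{\mathrm{MC}}$ on all such marginals, one obtains $\int q_\beta \log q^{\mathrm{MC}}\, dx_{0:T} = \int q^{\mathrm{MC}} \log q^{\mathrm{MC}}\, dx_{0:T}$. Hence
\begin{align}
0 \le \text{KL}[q_\beta \mid\mid q^{\mathrm{MC}}] = \int q_\beta \log q_\beta\, dx_{0:T} - \int q^{\mathrm{MC}} \log q^{\mathrm{MC}}\, dx_{0:T},
\end{align}
so $q^{\mathrm{MC}}$ minimises $\int q_\beta \log q_\beta$ and therefore maximises the entropy term in $\mathcal{L}$, with equality iff $q_\beta = q^{\mathrm{MC}}$. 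Thus the $\beta_2$-maximiser of $\mathcal{L}$ at any fixed $\beta_1$ is the Markov chain, and applying this at the optimal $\beta_1^\star$ yields the factorisation \eqref{SE:eq:conditional_factorisation}.

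The main obstacle I anticipate is the bookkeeping in the marginal-matching identity $\int q_\beta \log q^{\mathrm{MC}} = \int q^{\mathrm{MC}} \log q^{\mathrm{MC}}$: one must carefully verify that writing $\log q^{\mathrm{MC}}(x_{0:T}) = \log q_{\beta_1}(x_0) + \sum_k \left[\log q_{\beta_1}(x_{k-1},x_k) - \log q_{\beta_1}(x_{k-1})\right]$ reduces each contributing integral to one over a single consecutive pair, so that the agreement of the two distributions on those marginals forces the integrals to coincide. A secondary point is to invoke the strictness of the KL inequality, together with the feasibility guaranteed by Assumption~\ref{SE:ass:suffienct flexibity}, so as to conclude that the optimum is genuinely the factorised density rather than merely one sharing its objective value.
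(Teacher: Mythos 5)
Your proof is correct, and while it shares the paper's overall skeleton, it resolves the key step by a genuinely different mechanism. Both arguments begin the same way: since \(I_1,I_2,I_3\) depend on \(q_\beta\) only through the singleton and consecutive-pairwise marginals, which Assumption~\ref{SE:ass:suffienct flexibity} ties to \(\beta_1\) alone, maximising \(\mathcal{L}\) over \(\beta_2\) at fixed \(\beta_1\) reduces to maximising the joint entropy subject to fixed pairwise marginals. The paper then expands that entropy by the chain rule into conditional entropies \(-\int q_\beta(x_{0:k})\log q_\beta(x_k\mid x_{k-1},x_{0:k-2})\,dx_{0:k}\) and bounds each term by the ``conditioning reduces entropy'' result (Lemma~\ref{SE:lem:conditional diff entropy}, Theorem 8.6.1 of Cover and Thomas), with equality attainable by Assumption~\ref{SE:ass:suffienct flexibity}. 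You instead exhibit the maximiser directly: the Markov extension \(q^{\mathrm{MC}}\) of the pairwise marginals is feasible in the family (second part of the assumption), the cross-entropy identity \(\int q_\beta \log q^{\mathrm{MC}} = \int q^{\mathrm{MC}} \log q^{\mathrm{MC}}\) holds because \(\log q^{\mathrm{MC}}\) is a sum of functions of consecutive pairs on which the two densities agree, and then \(0\le\mathrm{KL}[\,q_\beta\,\|\,q^{\mathrm{MC}}\,]\) shows \(q^{\mathrm{MC}}\) is the unique entropy maximiser in the marginal class. Your route is somewhat more self-contained---it needs only Gibbs' inequality rather than the conditional-entropy lemma (which is itself typically derived from nonnegativity of a KL divergence)---and it yields the equality case \(q_\beta = q^{\mathrm{MC}}\) (a.e.) explicitly, which cleanly justifies that the optimum, not merely an optimum value, is the factorised density; this is a point the paper handles somewhat loosely with the phrase that any other \(\beta_2\) ``is not optimal.'' What the paper's termwise route buys is brevity via a citable standard result and a decomposition that mirrors the chain-rule bookkeeping reused immediately afterwards in the proof of Lemma~\ref{SE:lem:entropy_decomposition}. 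Both proofs ultimately rest on the same two consequences of Assumption~\ref{SE:ass:suffienct flexibity}: invariance of the pairwise marginals with respect to \(\beta_2\), and realisability of the Markov extension inside the assumed family.
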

	\begin{proof}
		See Appendix~\ref{SE:app:proof of factorisation lemma}.
	\end{proof}
	\begin{lemma} \label{SE:lem:entropy_decomposition}
		For the purpose of finding each optimal pairwise distribution \(q_{\beta^\star}\left(x_{k-1},x_k\right)\), the second integral in \eqref{SE:eq:full_L} can be calculated as
		\begin{align}
			\int q_\beta(x_{0:T}) \log q_\beta(x_{0:T}) dx_{0:T} &= I_4\left(\beta\right),
		\end{align}
		where	
		\begin{align}
			I_4\left(\beta\right) &= \sum_{k=1}^{T}\int q_\beta(x_{k-1:k}) \log q_\beta(x_{k-1:k}) dx_{k-1:k} \notag \\
			&\quad - \sum_{k=1}^{T-1} \int q_\beta(x_{k}) \log q_\beta(x_{k}) dx_{k}, \label{SE:eq:expanded_I4}
		\end{align}
		provided \(q_\beta\left(x_{0:T}\right)\) satisfies Assumption~\ref{SE:ass:suffienct flexibity}.
	\end{lemma}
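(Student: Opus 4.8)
The plan is to exploit the conditional (first-order Markov) factorisation of the optimal density established in Lemma~\ref{SE:lem:optimal_q_factorises}, which makes the full joint entropy collapse into a telescoping sum of pairwise and singleton terms. First I would take logarithms in \eqref{SE:eq:conditional_factorisation} to write $\log q_\beta(x_{0:T}) = \log q_\beta(x_0) + \sum_{k=1}^{T} \log q_\beta(x_k \mid x_{k-1})$ for any $\beta$ whose density is Markov, and then rewrite each conditional as the ratio $q_\beta(x_k \mid x_{k-1}) = q_\beta(x_{k-1},x_k)/q_\beta(x_{k-1})$. Substituting this and reindexing the singleton sum as $\sum_{k=1}^{T} \log q_\beta(x_{k-1}) = \sum_{k=0}^{T-1} \log q_\beta(x_k)$ causes the $\log q_\beta(x_0)$ contribution to cancel, leaving the identity $\log q_\beta(x_{0:T}) = \sum_{k=1}^{T} \log q_\beta(x_{k-1},x_k) - \sum_{k=1}^{T-1} \log q_\beta(x_k)$, which already exhibits exactly the index ranges appearing in \eqref{SE:eq:expanded_I4}.

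The second step is to multiply this pointwise identity by $q_\beta(x_{0:T})$ and integrate over $x_{0:T}$. Each summand on the right depends on only one or two adjacent components, so --- exactly as in the dimensionality reduction already used to pass from the full integrals to $I_1,I_2,I_3$ --- integrating out the remaining states replaces $q_\beta(x_{0:T})$ by the corresponding marginal, giving $\int q_\beta(x_{0:T}) \log q_\beta(x_{k-1},x_k)\, dx_{0:T} = \int q_\beta(x_{k-1:k}) \log q_\beta(x_{k-1:k})\, dx_{k-1:k}$, and likewise $\int q_\beta(x_{0:T}) \log q_\beta(x_k)\, dx_{0:T} = \int q_\beta(x_k) \log q_\beta(x_k)\, dx_k$. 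Collecting the two sums reproduces $I_4(\beta)$ precisely as stated in \eqref{SE:eq:expanded_I4}.

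The step I expect to be the main obstacle is justifying the qualifier ``for the purpose of finding each optimal pairwise distribution,'' i.e.\ why it is legitimate to evaluate the joint entropy on the Markov factorisation rather than on a fully general $q_\beta$, since the left-hand integral in \eqref{SE:eq:full_L} depends on $\beta$ through all its components. Here I would argue that, by Assumption~\ref{SE:ass:suffienct flexibity}, the terms $I_1,I_2,I_3$ depend on $\beta$ only through the pairwise (and hence singleton) marginals, i.e.\ only through $\beta_1$. Consequently, for a fixed $\beta_1$, maximising $\mathcal{L}(\beta)=I_1+I_2+I_3-\int q_\beta(x_{0:T})\log q_\beta(x_{0:T})\,dx_{0:T}$ over $\beta_2$ is equivalent to \emph{minimising} the joint entropy $\int q_\beta(x_{0:T}) \log q_\beta(x_{0:T})\, dx_{0:T}$ over $\beta_2$. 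Assumption~\ref{SE:ass:suffienct flexibity} guarantees that a $\beta_2$ rendering $q_\beta$ first-order Markov is attainable, and the maximum-entropy characterisation --- equivalently, Lemma~\ref{SE:lem:optimal_q_factorises} --- identifies this Markov choice as the minimiser. Substituting it yields the telescoped entropy of the first two steps, so along the $\beta_2$-optimal slice the entropy equals exactly $I_4(\beta)$, a quantity that depends on $\beta_1$ alone. I would close by observing that this is precisely what is needed: the optimisation over the pairwise marginals may then proceed using $I_1+I_2+I_3-I_4$ without ever reconstructing the intractable full joint entropy.
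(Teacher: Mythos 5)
Your proposal is correct and follows essentially the same route as the paper: invoke Lemma~\ref{SE:lem:optimal_q_factorises} (via Assumption~\ref{SE:ass:suffienct flexibity}) to restrict to the Markov-factorised subfamily without loss of optimality, telescope $\log q_\beta(x_{0:T})$ into pairwise-minus-singleton terms by writing each conditional as $q_\beta(x_{k-1},x_k)/q_\beta(x_{k-1})$ and cancelling the $\log q_\beta(x_0)$ contribution, then reduce the integral dimensions to marginals. Your closing argument---profiling out $\beta_2$ by noting that $I_1,I_2,I_3$ depend only on $\beta_1$ and that the Markov choice of $\beta_2$ minimises $\int q_\beta \log q_\beta$---is just a more explicit rendering of the paper's one-line justification that the factorisation assumption ``merely limits the optimisation to the optimal subfamily.''
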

	\begin{proof}
		See Appendix~\ref{SE:app:entropy_decomposition}.
	\end{proof}

	This simplifies the calculation of the lower bound to 
	\begin{subequations}
		\begin{align}
			\mathcal{L}\left(\beta\right) &= \int q_\beta(x_{0:T}) \log \frac{ p_\theta(x_{0:T}, y_{1:T})  }{ q_\beta(x_{0:T}) } dx_{0:T}  \\
			&= I_1\left(\beta\right) + I_2\left(\beta\right) + I_3\left(\beta\right) - I_4\left(\beta\right).
		\end{align}
	\end{subequations}
	From \(I_1\left(\beta\right)\), \(I_{2}\left(\beta\right)\), \(I_3\left(\beta\right)\), and \(I_4\left(\beta\right)\) it is seen that the calculation of \(\mathcal{L}(\beta)\) never requires the full distribution \( q_\beta(x_{0:T})\), instead it is just the pairwise joint  distributions \( q_\beta(x_{k-1},x_{k}) \) and marginals thereof that are required.

	It remains to detail how to address the generally intractable integrals in \(I_1\left(\beta\right)\), \(I_{2}\left(\beta\right)\), \(I_3\left(\beta\right)\), and \(I_4\left(\beta\right)\) and how to perform the maximisation. While critical from an implementation point of view, these details are conceptually unimportant. As such, they are temporarily deferred in favour of examining the application of this approach to a linear system and the minor change required to apply this approach to more general state-space models.

	\subsubsection{Application to linear and nonlinear systems}
		Consider a linear state-space model with additive Gaussian noise according to
		\begin{subequations}
			\begin{align}
				x_{k+1} &= A x_k + B u_k + \pn_k, \\
				y_k &= C x_k + D u_k + \mn_k,
			\end{align}
			where
			\begin{align}
				\pn_k \sim \mathcal{N}\left(0,Q\right), \qquad \mn_k \sim \mathcal{N}\left(0,R\right),
			\end{align}
		\end{subequations}
		\(A\), \(B\), \(C\), \(D\), \(Q\), and \(R\) are model parameters, \(u_k\) is a known input, and the initial state prior is given by \( p(x_0) = \mathcal{N}\left(x_0; \mu_p, P_p\right) \). The smoothing distribution, \( p_\theta(x_{0:T} \mid y_{1:T}) \),  for this class of systems takes the form of a multivariate Gaussian, the numerical values of which are given by the closed-form Kalman smoothing equations \citep{Rauch1965}.
		
		Now consider the proposed smoothing approach applied to this linear system. First, a form for \( q_{\beta}(x_{0:T})\) must be selected. We chose to use a multivariate Gaussian, which also implies each pairwise joint distribution is a multivariate Gaussian. 
		\begin{lemma} \label{SE:lem:sufficient flexiby of a gaussian}
			A multivariate Gaussian distribution for \( q_{\beta}(x_{0:T}) \) satisfies Assumption~\ref{SE:ass:suffienct flexibity}.
		\end{lemma}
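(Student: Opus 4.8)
The plan is to exhibit an explicit partition $\beta = \{\beta_1,\beta_2\}$ of the parameters of a multivariate Gaussian and verify the two requirements of Assumption~\ref{SE:ass:suffienct flexibity} directly. Write $q_\beta(x_{0:T}) = \mathcal{N}(x_{0:T}; \mu, \Sigma)$ and partition the mean and covariance into $n_x$-dimensional blocks, $\mu = (\mu_0^\Transp, \dots, \mu_T^\Transp)^\Transp$ and $\Sigma = [\Sigma_{ij}]_{i,j=0}^{T}$ with $\Sigma_{ij} = \mathrm{Cov}(x_i,x_j)$. Since any marginal of a Gaussian is obtained by extracting the corresponding sub-mean and principal submatrix of the covariance, each pairwise joint $q_\beta(x_{k-1},x_k)$ is Gaussian and depends only on $\mu_{k-1},\mu_k$ and the blocks $\Sigma_{k-1,k-1}$, $\Sigma_{k-1,k}$, $\Sigma_{k,k}$.

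This observation motivates the split: I would let $\beta_1$ collect the full mean $\mu$ together with the block-tridiagonal part of the covariance, namely the diagonal blocks $\{\Sigma_{kk}\}_{k=0}^{T}$ and the first super-diagonal blocks $\{\Sigma_{k-1,k}\}_{k=1}^{T}$, and let $\beta_2$ collect the remaining blocks $\Sigma_{ij}$ with $|i-j|\ge 2$. With this choice the first requirement is immediate: every consecutive pairwise marginal is a function of $\beta_1$ alone, so $q_\beta(x_{k-1},x_k) = q_{\beta_1}(x_{k-1},x_k)$ and $\beta_2$ has no influence on it.

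For the second requirement I would, given $\beta_1$, construct a specific Gauss--Markov chain and show it corresponds to an admissible $\beta_2$. Define $\tilde q(x_{0:T}) = q_{\beta_1}(x_0)\prod_{k=1}^{T} q_{\beta_1}(x_k \mid x_{k-1})$, where each conditional is the Gaussian conditional read off from the pair $q_{\beta_1}(x_{k-1},x_k)$, with mean $\mu_k + \Sigma_{k,k-1}\Sigma_{k-1,k-1}^{-1}(x_{k-1}-\mu_{k-1})$ and covariance $\Sigma_{kk} - \Sigma_{k,k-1}\Sigma_{k-1,k-1}^{-1}\Sigma_{k-1,k}$. By its product form $\tilde q$ satisfies $\tilde q(x_k \mid x_{k-1}, x_{0:k-2}) = \tilde q(x_k \mid x_{k-1})$, i.e. the Markov property. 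It then remains to check that $\tilde q$ is a genuine multivariate Gaussian whose consecutive pairwise marginals coincide with those prescribed by $\beta_1$; the parameter vector of $\tilde q$ will then have block-tridiagonal part equal to $\beta_1$ and supply the desired $\beta_2$ in its remaining covariance blocks.

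The hard part will be this last verification. I would argue it by induction: since $\beta_1$ was extracted from a genuine covariance $\Sigma$, the single-variable marginals overlap consistently, i.e. the $x_k$-marginal obtained from the pair $(x_{k-1},x_k)$ equals the one obtained from $(x_k,x_{k+1})$. This consistency lets me propagate the claim $\tilde q(x_{k-1},x_k) = q_{\beta_1}(x_{k-1},x_k)$ forward: assuming the $x_{k-1}$-marginal is correct, multiplying by the exact conditional $q_{\beta_1}(x_k \mid x_{k-1})$ reproduces the prescribed pair and hence the correct $x_k$-marginal for the next step. Positive-definiteness, and thus validity of $\tilde q$ as a Gaussian, is automatic because $\tilde q$ is a product of honest Gaussian densities whose conditional covariances are the Schur complements $\Sigma_{kk} - \Sigma_{k,k-1}\Sigma_{k-1,k-1}^{-1}\Sigma_{k-1,k}$, which are positive definite whenever the pairwise covariances are. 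This establishes the existence of $\beta_2$ and completes the verification of Assumption~\ref{SE:ass:suffienct flexibity}.
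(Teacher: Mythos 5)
Your proof is correct, and it rests on the same parameter split the paper uses --- $\beta_1$ collects the means together with the block-tridiagonal covariance blocks, $\beta_2$ the blocks $\Sigma_{ij}$ with $|i-j|\ge 2$ --- but your existence argument for $\beta_2$ takes a genuinely different route. The paper reduces to the case $T=2$, verifies the conditional-independence requirement $q(x_2 \mid x_1, x_0) = q(x_2 \mid x_1)$ algebraically by matching Schur complements, exhibits the explicit completion $\Sigma_{13} = \Sigma_{12}\Sigma_{22}^{-1}\Sigma_{23}$, and then asserts that for $T>2$ ``the described process is applied repeatably with appropriate redefinitions.'' You instead handle general $T$ in one stroke: you build the Gauss--Markov chain $\tilde q(x_{0:T}) = q_{\beta_1}(x_0)\prod_{k=1}^{T} q_{\beta_1}(x_k \mid x_{k-1})$, which has the required conditional-independence property by its product form, and prove by induction --- using the overlap consistency of single-state marginals, valid precisely because $\beta_1$ is extracted from a genuine joint covariance --- that its consecutive pairwise marginals reproduce those prescribed by $\beta_1$; the off-tridiagonal blocks of its covariance then supply $\beta_2$. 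Your route buys a uniform treatment of arbitrary $T$ (removing the paper's informal iteration step) and makes well-posedness explicit, since $\tilde q$ is automatically a valid Gaussian with positive-definite conditional covariances given by Schur complements. The paper's route buys a concrete closed-form for the completing block, which your construction produces only implicitly (its chain yields $\mathrm{Cov}(x_{k-1},x_{k+1}) = \Sigma_{k-1,k}\Sigma_{k,k}^{-1}\Sigma_{k,k+1}$, the same formula). One point worth making sharper in a final write-up: the paper's $T=2$ check verifies only that the conditional \emph{covariance} is unchanged by additionally conditioning on $x_0$, whereas conditional independence for Gaussians also constrains the conditional mean; your construction sidesteps this subtlety entirely because the Markov property of $\tilde q$ holds by construction rather than by an after-the-fact check.
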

		\begin{proof}
			See Appendix~\ref{SE:app:proof of sufficient flexiby of a gaussian}.
		\end{proof}
		
		For the linear case being considered, the multivariate Gaussian choice implies that \(  q_{\beta}(x_{0:T}) \) and \( p_\theta(x_{0:T} \mid y_{1:T}) \) have the same functional form, which in turn means that there exists some \(\beta^\star\) such that \(  q_{\beta^\star}(x_{0:T}) \) is identical to \( p_\theta(x_{0:T} \mid y_{1:T}) \). For this \(\beta^\star\) we have that 
		\begin{align}
			\text{KL}[ q_{\beta^\star}(x_{0:T})  \mid\mid p_\theta(x_{0:T} \mid y_{1:T} )] = 0, 
		\end{align}
		and therefore 
		\begin{align}
			\log p_\theta(y_{1:T}) = \mathcal{L}\left(\beta^\star\right).
		\end{align}
		As \( \mathcal{L}\left(\beta\right) \) is a lower bound to \( \log p_\theta(y_{1:T}) \) this implies that \( \beta^\star\) can be found via
		\begin{align}
			\beta^\star &= \arg\max_\beta  \quad \mathcal{L}(\beta),
		\end{align}
		where, due to the system being linear, the integrals within  \(\mathcal{L}(\beta)\) have a tractable closed-form solution and the resulting distribution \(  q_{\beta^\star}(x_{0:T}) \) will match the Kalman smoothed solution. This occurs due to the fact that \(  q_{\beta}(x_{0:T}) \) is sufficiently flexible to exactly match \( p_\theta(x_{0:T} \mid y_{1:T} ) \) and the maximisation has indeed driven any difference between these distributions to zero.
		
		We now progress to applying this smoothing approach to general nonlinear state-space models in the form of \eqref{SE:eq:general nonlinear model} with the continued assumption that \(q_\beta\left(x_{0:T}\right)\), and hence each \( q_\beta(x_{k-1},x_{k}) \), is a multivariate Gaussian distribution. Compared with the application to a linear system, this has two important consequences. 
		
		Firstly, as true for all assumed density approaches to state estimation, \(  q_{\beta}(x_{0:T}) \) will, generally, not be able to exactly match \( p_\theta(x_{0:T} \mid y_{1:T} ) \). As such, the KL divergence term will remain strictly positive, and \( \mathcal{L}\left(\beta\right) \) will not equal \( \log p_\theta(y_{1:T}) \) in general. A related consequence is that when the true state distribution is far from Gaussian, such as when multi-modal, the method developed in this paper cannot necessarily be expected to perform well, a limitation shared with all assumed Gaussian approaches to state estimation.
		
		The second, more influential, consequence is that \(\mathcal{L}(\beta)\) becomes intractable to compute due to the general nonlinear integrals within \(I_2\left(\beta\right)\) and \(I_3\left(\beta\right)\). Since evaluation of these integrals is necessary they will be numerically approximated, the approximations to \(I_2\left(\beta\right)\),  \(I_3\left(\beta\right)\), and hence \(\mathcal{L}(\beta)\) are given by \(\hat{I}_{2}\left(\beta\right)\), \(\hat{I}_{3}\left(\beta\right)\), and \(\hat{\mathcal{L}}(\beta)\) respectively. As the optimisation problem of \eqref{SE:eq:highest_level_optim} is intractable it is replaced by
		\begin{subequations} \label{SE:eq:high_level_approx_optim}
			\begin{align} 
				\beta^\star &= \arg\max_\beta  \quad\hat{\mathcal{L}}(\beta) \\
				&= \arg\max_\beta \quad I_1\left(\beta\right) + \hat{I}_{2}\left(\beta\right) + \hat{I}_{3}\left(\beta\right) - I_4\left(\beta\right).
			\end{align}
		\end{subequations}
		
		This approach to smoothing requires only one minor adaptation when applied to the more general class of nonlinear state-space models given by \eqref{SE:eq:general nonlinear model} as opposed to linear systems. This single difference is that an integral must be numerically approximated rather than calculated in closed form.
		
		This approach to assumed Gaussian state estimation possesses some important differences from more typical methods, such as the EKS/URTSS. One difference is that this approach is iterative (via the optimisation), which allows all function approximations to be about the approximated posterior distribution and improves performance compared to noniterative methods \citep{GarciaFernandez2015,Barfoot2017}. Another difference is that this approach is not based on Kalman-like corrections, which, for nonlinear systems, is limiting \citep{Darling2017}.

	\subsubsection{Tractable Gaussian Assumed Density Smoother}
		In this section, a parametrisation of the assumed state distribution is detailed. The given parametrisation is selected as it allows \eqref{SE:eq:high_level_approx_optim} to become a tractable optimisation problem with readily obtainable first- and second-order derivatives.  The resulting  optimisation problem is in a form that can be directly handled by standard nonlinear programming routines.
		
		Each joint normally distributed assumed density will be represented by
		\begin{align}
			q_{\beta_{k}}\left(x_{k-1},x_{k}\right) = \mathcal{N}\left(	\begin{bmatrix}
				x_{k-1} \\
				x_{k}
			\end{bmatrix} ; 
			\begin{bmatrix}
				\mu_k \\ 
				\bar{\mu}_k
			\end{bmatrix},
			P_k^{\frac{\Transp}{2}} P_k^{\frac{1}{2}}
			\right),
		\end{align}
		where \( \mu_k \in \mathcal{R}^{n_x \times 1}\), \( \bar{\mu}_k \in \mathcal{R}^{n_x \times 1}\), and \(P_k^{\frac{1}{2}} \in \mathcal{R}^{2n_x \times 2n_x}\) describe the joint state distribution and
		\begin{align}
			P_k^{\frac{1}{2}} = \begin{bmatrix}
				A_k & B_k \\
				0 & C_k
			\end{bmatrix},
		\end{align}
		where \( A_k, B_k, C_k \in \mathcal{R}^{n_x \times n_x} \) and \( A_k, C_k \) are upper triangular. The  parameters in \(\beta_{k}\) are now given as 
		\begin{align}
			\beta_{k} =  \{ \mu_k, \bar{\mu}_k, A_k, B_k, C_k \},
		\end{align}
		and they are related to \(\beta\) by
		\begin{align}
			\beta = \{\beta_1, \beta_2, \dots, \beta_T \}.
		\end{align}
		Furthermore, we have to ensure that the marginal distribution \(q_\beta(x_k)\) is the same independently of if it is calculated from \( q_{\beta}\left(x_{k-1},x_{k}\right) \) or \(q_{\beta}\left(x_k,x_{k+1}\right) \). This is expressed by the nonlinear constraints defined by the feasible set
		\begin{align}
			\Omega := \{\beta \in \mathcal{R}^{n_\beta} \mid c_k\left(\beta\right) = 0, \quad k = 1, \dots, T-1 \}, 
		\end{align}
		where \(n_\beta = T\left(2n_x + n_x\left(2n_x+1\right)\right)\) and
		\begin{align}
			c_k\left(\beta\right) = \begin{bmatrix}
				\mu_{k+1} - \bar{\mu}_k \\
				\text{vech}(B_{k}^\Transp {B}_{k} + {C}_{k}^\Transp {C}_{k}) - \text{vech}({A}_{k+1}^\Transp {A}_{k+1})
			\end{bmatrix},
		\end{align}
		where \( \text{vech}(X) \), defined in \citep{Henderson1979}, stacks the distinct elements of the symmetric matrix \(X\).
		
		The important property of this parametrisation is that the Cholesky decomposition, \(P_k^{\frac{1}{2}} \), of each joint distribution is directly available from \(\beta_k \). This property allows the components of \(\hat{I}_{2}\left(\beta\right)\) and \(\hat{I}_{3}\left(\beta\right)\) corresponding to each time step, denoted \(\hat{I}_{2_k}\left(\beta\right)\) and \(\hat{I}_{3_k}\left(\beta\right)\), respectively, to be easily approximated using Gaussian quadrature as
		\begin{subequations} \label{SE:eq:approx_integral}
			\begin{align} 
				\hat{I}_{2_k}\left(\beta\right) &= \sum_{j = 1}^{n_s} w_j  \log p_\theta(\bar{x}_{k}^j \mid x_{k-1}^j), \\
				\hat{I}_{3_k}\left(\beta\right) &= \sum_{j = 1}^{n_s} w_j  \log p_\theta( y_{k} \mid \bar{x}_{k}^j).
			\end{align}
		\end{subequations}
		The weights $w_j$  and the $n_s$ sigma points are denoted by \( \varepsilon^j_k  \in \mathcal{R}^{2n_x \times 1}\), where
		\begin{align}
			\varepsilon^j_k = \begin{bmatrix}
				x_{k-1}^j         \\
				\bar{x}_{k}^j
			\end{bmatrix},
		\end{align}
		and are given by linear combinations of the joint mean
		\(
		\begin{bmatrix}
			\mu_k^\Transp, 
			\bar{\mu}_k^\Transp
		\end{bmatrix}^\Transp
		\)
		and the columns of \(P_k^{\frac{\Transp}{2}}\) according to the specific Gaussian quadrature scheme utilised. Specific quadrature schemes are discussed further in Section~\ref{SE:sec:integration}. The sigma points, \(\varepsilon^j_k\), being linear combinations of the elements of \(\beta_k\) is important as it significantly simplifies the calculation of both first- and second-order derivatives used in the optimisation. Using \eqref{SE:eq:approx_integral}, the integrals \(I_2\left(\beta\right)\) and \(I_3\left(\beta\right)\) are approximated as
		\begin{subequations}
			\begin{align}
				\hat{I}_{2}\left(\beta\right) &= \sum_{k=1}^{T} \hat{I}_{2_k}\left(\beta\right), \\
				\hat{I}_{3}\left(\beta\right) &= \sum_{k=1}^{T} \hat{I}_{3_k}\left(\beta\right).
			\end{align}
		\end{subequations}
		Note that, by using Gaussian quadrature to approximate \(I_2(\beta)\) and \(I_3(\beta)\), we have assumed that both \( \log p_\theta(x_{k} \mid x_{k-1})\) and \( \log p_\theta( y_k \mid x_{k})\) can be evaluated pointwise.

		The likelihood lower bound \( \mathcal{L}\left(\beta\right)\) can now be approximated by \( \hat{\mathcal{L}}\left(\beta\right) \) as
		\begin{align}
			\hat{\mathcal{L}}\left(\beta\right) &= I_1\left(\beta\right) + \hat{I}_{2}\left(\beta\right) + \hat{I}_{3}\left(\beta\right) - I_4\left(\beta\right),
		\end{align} 
		where, under the assumption of a Gaussian prior, the full calculations of \(I_1\left(\beta\right)\), \(\hat{I}_{2}\left(\beta\right)\), \(\hat{I}_{2}\left(\beta\right)\), and \( I_4\left(\beta\right)\) are detailed in Appendix~\ref{SE:app:full_log_bound_approx_eqn}.
		
		The proposed approach to assumed density smoothing for nonlinear models in the form of \eqref{SE:eq:general nonlinear model} is now tractably given as performing
		\begin{subequations} \label{SE:eq:tractable_optim}
			\begin{align} 
				\beta^\star = \arg\max_\beta \quad &\hat{\mathcal{L}}(\beta),  \\
				\text{s.t.} \quad &\beta \in \Omega.
			\end{align}
		\end{subequations}
		This constrained optimisation problem is of a standard form and can be directly solved using standard nonlinear programming routines. This optimisation is robustly performed without introducing any further approximations, to a local maximum, utilising exact first- and second-order derivatives.
		
		The assumed density approximation to each pairwise smoothed joint distribution are given by \(q_{\beta^\star_{k}}(x_{k-1},x_{k})\) for \(k \in 1, \dots, T\) where
		\begin{align}
			\{\beta_1^\star, \beta_2^\star, \dots, \beta_T^\star \} = \beta^\star.
		\end{align}
		The resulting smoother is summarised in Algorithm~\ref{SE:alg:smoothing}.
		\begin{figure}[!t] 
			\begin{algorithm}[H] 
				\caption{Smoothing}
				\label{SE:alg:smoothing}
				\begin{algorithmic}	[1]
					\renewcommand{\algorithmicrequire}{\textbf{Input:}}
					\renewcommand{\algorithmicensure}{\textbf{Output:}}
					\REQUIRE Measurements \(y_{1:T}\), prior mean \(\mu_p\) and covariance \(P_p\), initial estimate of \(\beta\). 
					\ENSURE Smoothed distributions \(q_{\beta^\star_{k}}(x_{k-1},x_{k})\) for \(k \in 1, \dots, T\).
					\STATE Obtain $\beta^\star$ from \eqref{SE:eq:tractable_optim} initialised at \(\beta\).
					\STATE \( \{\beta_1^\star, \beta_2^\star, \dots, \beta_T^\star \} = \beta^\star \).
				\end{algorithmic}
			\end{algorithm}
		\end{figure}
	
		The constrained formulation of the optimisation problem differs from the approach usually taken in VI, where unconstrained formulations are typically used. Here the optimisation problem has been over-parametrised, and nonlinear constraints introduced to maintain equivalence to the original optimisation problem. This decouples each joint distribution within the calculation of \(\hat{\mathcal{L}}(\beta)\); without introducing assumptions. The benefit of this decoupling is a significant simplification in the calculation of derivatives, particularly the second-order derivatives. This approach differs from typical VI approaches, which reduce the complexity of the optimisation in differing ways. Indeed, one of the most common approaches in VI \citep{Blei2017} is to assume a \emph{mean-field variational family} \citep{Bishop2006}, or decoupled, assumption in the form of \( q_\beta(x_{0:T}) \) to simplify calculations. In the context of state-space models, this corresponds to assuming mutual independence between consecutive time steps or even individual states, an inappropriate assumption for state-space models. In contrast, the proposed constrained approach allows the optimisation problem to be simplified without introducing avoidable and undesirable assumptions.


\subsection{Filtering} \label{SE:sec:filtering}
	In this section, one step of the recursive assumed density filtering problem is considered. That is, given an assumed Gaussian density for the previous time step, \(p(x_0) = \mathcal{N}\left(x_0;\mu_p,P_p\right)\), and the measurement for the current time step, \(y_1\), compute a Gaussian approximation of \(p(x_1 \mid y_1)\).
	
	As this is a special case of the smoothing problem with \(T = 1\), the proposed approach to filtering consists of two steps. Firstly, perform smoothing with a single measurement. Secondly, marginalise the resultant joint distribution to obtain a Gaussian approximation of \(p(x_1 \mid y_1)\). Algorithm~\ref{SE:alg:filter} provides the proposed approach to filtering for a single time step and results in unconstrained optimisation problems. For a sequence of measurements, \(y_{1:T}\), \(T\) applications of Algorithm~\ref{SE:alg:filter}, performed sequentially for each time step, is required and hence has a computational complexity linear with the number of time steps considered, i.e. \(\mathcal{O}\left(T\right)\). 
	\begin{figure}[!t]
		\begin{algorithm} [H]
			\caption{Filtering for a single time step}
			\label{SE:alg:filter}
			\begin{algorithmic}	[1]	
				\renewcommand{\algorithmicrequire}{\textbf{Input:}}
				\renewcommand{\algorithmicensure}{\textbf{Output:}}
				\REQUIRE Measurement \(y_{1}\), prior mean \(\mu_p\) and covariance \(P_p\), initial estimate of \(\beta\).
				\ENSURE Filtered distribution \(\mathcal{N}\left(x_1;\bar{\mu}_1,B_1^\Transp B_1 + C_1^\Transp C_1 \right) \).
				\STATE Obtain \(\beta^\star\) using Algorithm~\ref{SE:alg:smoothing} with \(T = 1\) initialised at \(\beta\).
				\STATE Extract \(\bar{\mu}_1\), \(B_1\), and \(C_1\) from \(\beta^\star\).
			\end{algorithmic}
		\end{algorithm}
	\end{figure}


\subsection{Illustrative Example and Discussion}\label{SE:sec:discussion}
	In this section, properties of the proposed state estimation approach are discussed and compared with existing alternatives. The proposed approach, applied to a single correction step, serves as an illustrative example to assist this discussion. 
	
	In this illustrative example, we consider one correction step using the following measurement model,
	\begin{align}
		y = 0.01 x |x| + \mn, \qquad \mn \sim \mathcal{N}\left(0,0.1\right),
	\end{align}
	with a prior of \(\mathcal{N}(0,0.4)\) and an observed measurement of \(y = 15\).

	Fig.~\ref{SE:fig:correction_compare} shows the resulting state distribution for the proposed approach, the IPLF with 50 iterations (IPLF-50), and a numerically evaluated ground truth. As shown, the IPLF is overconfident, while the proposed approach provides a good approximation of the true posterior. This difference is because of the IPLF's basis in a Kalman filtering framework; this limits the corrections steps to linear corrections \citep{Darling2017}. The proposed approach has no such limitation. Note that, as discussed previously, the variational approach we have used can produce overconfident estimates. This effect, however, occurs when the form of the assumed density cannot closely match the true posterior, a situation that, as seen in Fig.~\ref{SE:fig:correction_compare}, does not occur in this example.
	\begin{figure}[!t]
		\centering
		\includegraphics{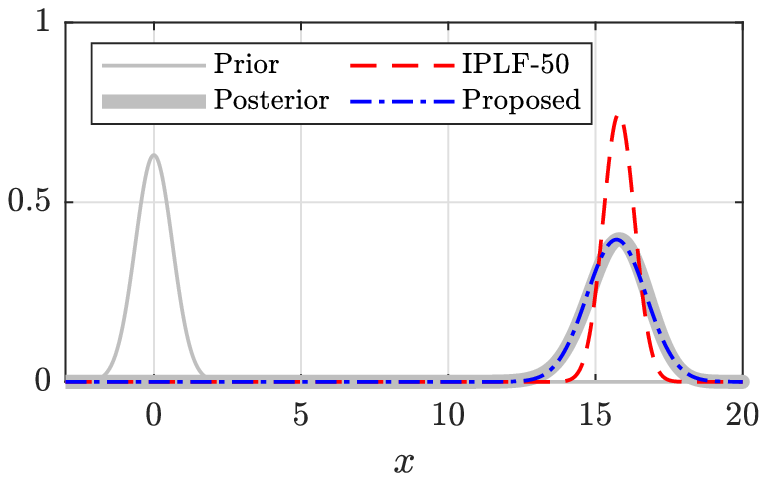}
		\caption{Comparison of state correction approaches illustrating a limitation of using Kalman based corrections for nonlinear measurement models. The Kalman correction based IPLF-50 is overconfident, while the proposed method, which is not limited to linear corrections, closely matches the true posterior.}
		\label{SE:fig:correction_compare}
	\end{figure}

	While Fig.~\ref{SE:fig:correction_compare} only shows the results for a specific prior and observed measurement, many others---using a variety of measurement models---were tested, with the general behaviour remaining consistent with the provided example. During this testing, the IPLF frequently suffered convergence issues. While this issue is acknowledged and addressed in \citep{Raitoharju2018}, it highlights a potential problem that these iterated approaches may suffer. A variety of works exists, see e.g. \citep{Sarkka2020,Skoglund2015,Skoglund2019}, which aim to address convergence issues in differing iterated filters. Generally, these works introduce a variety of safeguards, damping, line-searches, and Levenberg-Marquant based approaches to improve convergence.
	
	Fortunately, the ideas behind these approaches are well established in the extensive quantity of numerical optimisation literature such as \citep{Nocedal2006} and are included in all standard optimisation routines. A result of this is that the proposed state estimation approach requires no further considerations regarding convergence.
	
	This highlights a \textit{significant difference} between the proposed approach and the existing alternatives, namely framing the state estimation problem as an optimisation problem of a standard form. This framing allows a \textit{clean separation} between the aim of the approach, maximising a log-likelihood lower bound, and the specific details of the selected optimisation routine.   
	
	Due to this, contrary to alternatives such as \citep{Raitoharju2018,GarciaFernandez2015,GarciaFernandez2017}, the proposed approach has no tuning parameters. Instead, the optimisation required runs until convergence to a local maximum.


\section{Implementation} \label{SE:sec:implmentation}
	In this section, the implementation-specific details of the proposed state estimation approach are presented. Section~\ref{SE:sec:integration} examines the selected form of Gaussian quadrature utilised, Section~\ref{SE:sec:derivatives} address the calculation of derivatives required for efficient optimisation, and Section~\ref{SE:sec:initilisation} deals with how each optimisation problem has been initialised.
	
	\subsection{Integration} \label{SE:sec:integration}
		As commonly performed in Gaussian assumed density state estimation, we have utilised a third-order unscented transform \citep{Julier1997,Wan2000} to perform the Gaussian quadrature approximations required in all sigma-point methods utilised. The parameters \(\alpha\) and \(\kappa\) of the unscented transform were set as \(\alpha = 1\) and \(\kappa = 0\) for the proposed method. 
		
		It is important to highlight that the accuracy of integration can be increased by employing higher-order methods, such as those detailed in  \citep{Kokkala2015,Jia2013,Caflisch1998}. The tradeoff for this increased accuracy is increased computational load and does not require modification of the developed method beyond changing the integration scheme utilised.
	
	\subsection{Derivatives} \label{SE:sec:derivatives}
		To efficiently solve the resulting optimisation problems, exact first- and second-order derivatives have been utilised. Due to the assumed Gaussian distribution, the derivatives of \(I_1\left(\beta\right)\) and \(I_4\left(\beta\right)\) are available in closed-form. The remaining terms \(\hat{I}_2\left(\beta\right)\) and \(\hat{I}_3\left(\beta\right)\), containing general nonlinear functions, are not as straightforward. Fortunately, due to the careful parametrisation of each Gaussian to ensure that the sigma points are linear combinations of the variables that are optimised, exact gradients and exact Hessians can be obtained \textit{without any manual effort}. Standard automatic differentiation tools can be used for this; we have utilised CasADi \citep{Andersson2018}. 
	
		For smoothing, the number of variables and the Hessian dimension continues to grow with the number of measurements. However, the Hessian has a sparse block-diagonal structure. As such, the growth in dimension is not problematic as both the formation and subsequent multiplications, or decompositions, involving the Hessian, possess a linear in time complexity with respect to the number of measurements. Due to this, the calculations required to perform each iteration of the optimisation have a linear in time computational complexity, i.e. \(\mathcal{O}\left(T\right)\).
	
	\subsection{Initialisation} \label{SE:sec:initilisation}
		General nonconvex and nonlinear constrained optimisation problems are central to the proposed state estimation approach. This implies that the initialisations impact both the iterations and runtime required to reach a local minimum and, potentially, the solution obtained; for further details on how optimisation routines function and their properties, such as convergence guarantees, interested readers should refer to literature such as \cite{Nocedal2006} and \cite{Conn2000}. In this section, we examine options for initialising the optimisation problems. 
		
		The optimisation that arises from the filtering procedure in Algorithm~\ref{SE:alg:filter} requires initialisation with a joint Gaussian distribution over the previous and current time step. In the numerical examples provided, a predicted joint distribution calculated using a sigma point method is utilised and found to work well. Differing initialisations may, however, be preferable in some situations. An example of this is when alternative, faster methods, such as a UKF perform well. That solution can then be used as an initialisation for the proposed method and may reduce the number of iterations required to converge.
		
		For the smoothing problem, the initialisation consists of providing an initial estimate to each pairwise joint Gaussian distribution. As optimisation routines do not require the initial point to satisfy the constraints, there is significant flexibility regarding how these can be selected. In some situations, problem-specific knowledge can be used to obtain an initial point. More generally, initialising the smoother using each of the joint distributions produced by a filtering pass has been found to work well. The examples in Section~\ref{SE:sec:examples} use both of these approaches to initialise.
		
		As these are general nonlinear optimisation problems, it is impossible to provide guarantees, or quantitative guidelines, regarding initialisations that ensure avoidance of potentially undesirable or differing local minima. However, during development, both the proposed filtering and smoothing approaches have appeared consistently robust with respect to initialisation, provided the use of an ill-defined `reasonable' initial point.


\section{Examples} \label{SE:sec:examples}
	In this section, three numerical examples which profile the proposed approach to state estimation are presented; a challenging scalar system (Section~\ref{SE:sec:nonlinear scaler system}), a problem representative of a simple robotic system (Section~\ref{SE:sec:robot_example}), and a target tracking example using a von Mises-Fisher distribution (Section~\ref{SE:sec:target_tracking_example}).
	
	We have also used the proposed approach to estimate the states of randomly generated multi-dimensional linear systems, with each optimisation randomly initialised. All filtered states, smoothed states, and likelihood calculations matched the results obtained using the Kalman filtering and smoothing equations.
	
	In all examples, the optimisation problems within the proposed approach are run to convergence to a feasible local maximum, subject to an optimality tolerance of \num{1e-10} using \texttt{fmincon} \citep{MATLAB2020}. The exception to this is the smoothing problem in Section~\ref{SE:sec:robot_smoothing}, where an alternative trust region solver with an improved rate of convergence, subject to the same tolerance, is used.


\subsection{Nonlinear Scalar System}\label{SE:sec:nonlinear scaler system}
	In this example, we consider 50 time steps of the following nonlinear system
	\begin{subequations} \label{SE:eq:growth model}
		\begin{align}
			x_{k+1} &= 0.9 x_k + 10 \frac{x_k}{1 + x_k^2} + 8\cos(1.2 k)  + \pn_k,\\
			y_k &= 0.05 x_k^2 + \mn_k,
		\end{align}
	\end{subequations}
	where \( \pn_k \sim \mathcal{N}(0,1)\), \( \mn_k \sim \mathcal{N}(0,1)\) and $x_0\sim \mathcal{N}(5,4)$. This is identical to the setup used in \citep{GarciaFernandez2017}, where the IPLS idea was profiled. It is well-known to be a challenging state estimation problem possessing multi-modal state distributions.
	
	For this system, we will compare the performance of the URTSS, the IPLS, and the proposed smoother. The proposed smoother was initialised with the proposed filter. For the IPLS, one filter and 50 smoothing iterations, denoted IPLS-1-50, were used. As per \citep{GarciaFernandez2017}, using one filter iteration, with many smoothing iterations, performs best on this system.
	
	The performance of these methods is ideally compared using the KL divergence from the true smoothed distribution at each time step, \( \text{KL}\left[ p_\theta(x_{k} \mid y_{1:T} ) \mid \mid q_{\beta^\star}(x_{k}) \right] \). Since \(p_\theta(x_{k} \mid y_{1:T} )\) is intractable, we use a very accurate discrete approximation using a fine grid to implement the recursive Bayesian smoothing equations \citep{Kitagawa1987}. The KL divergence from this grid-based solution is used to evaluate each assumed density approach. The symmetric KL (SKL) divergence and the Jensen-Shannon (JS) divergence \citep{Lin1991} have been calculated similarly.
	
	Fig.~\ref{SE:fig:growth_example_smoothed_KL} shows box plots \citep{Tukey1972} of these divergences at each time step for 100 differing realisations of \eqref{SE:eq:growth model} with an initial state of \(x = 5\). The consistently lower divergences achieved by the proposed smoother shows it has outperformed the alternative methods.
	\begin{figure}[!t]
		\centering
		\includegraphics{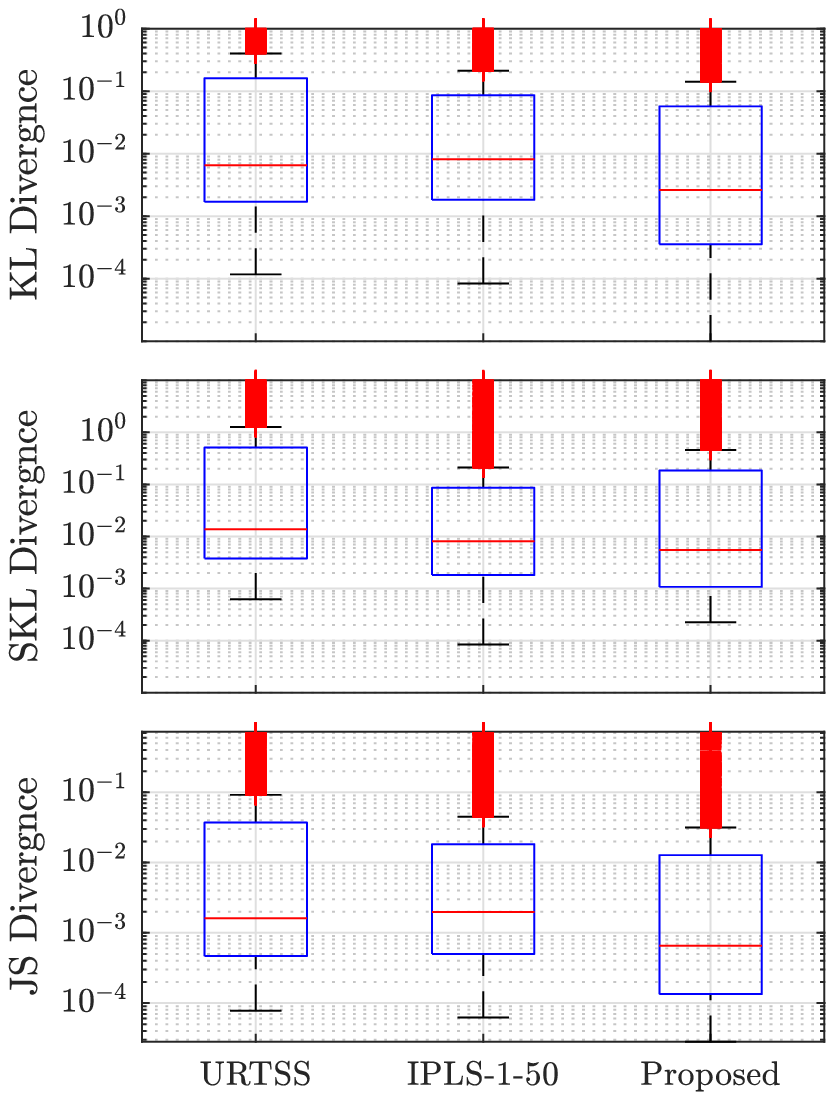}
		\caption[Divergences between the marginal smoothed state distributions and the grid-based ground truth.]{Box plot of the divergences between the marginal smoothed state distributions and the grid-based ground truth for 100 realisations of system \eqref{SE:eq:growth model}, each with 50 time steps.}
		\label{SE:fig:growth_example_smoothed_KL}
	\end{figure}
	
	This improvement has come at the cost of additional runtime, with the median runtimes for the URTSS, the IPLS-1-50, and the proposed smoother being \SI{4.4}{\milli\second}, \SI{257.5}{\milli\second}, and \SI{703.5}{\milli\second}, respectively. The proposed smoother required a median of 15 and a maximum of 53 optimisation iterations to converge.
	
	In this example, we have considered a challenging system that possesses non-Gaussian and multi-modal smoothed state distributions. Due to these properties, no Gaussian assumed density approach is expected to perform well at every time step. This limitation is shown in Fig.~\ref{SE:fig:growth_example_smoothed_dists} where all the assumed Gaussian approaches are seen to be overconfident. Despite this, we have demonstrated that the proposed smoothing method outperforms alternative assumed Gaussian approaches.
	\begin{figure}[!t]
		\centering
		\includegraphics{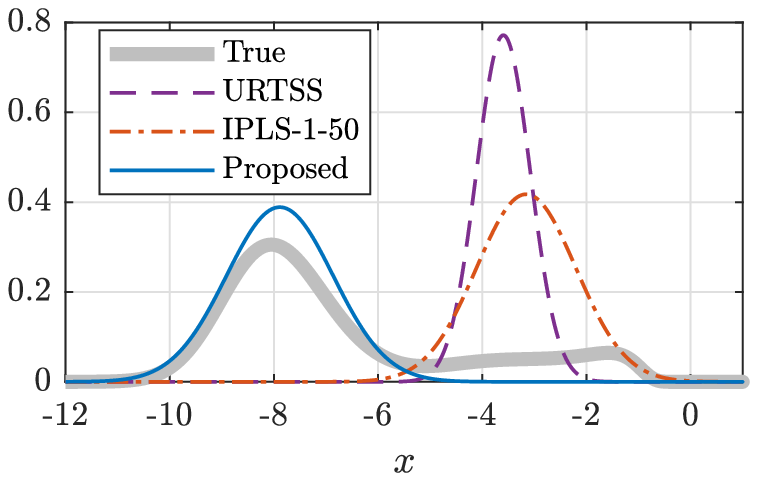}
		\caption[Plot of smoothed distributions for an example time step showing the limitations of assumed Gaussian approaches compared to the true posterior distribution.]{Plot of smoothed distributions for an example time step showing the limitations of assumed Gaussian approaches compared to the true posterior distribution.}
		\label{SE:fig:growth_example_smoothed_dists}
	\end{figure}
	

\subsection{Differential Drive Robot} \label{SE:sec:robot_example}
	In this example, we consider a continuous-time model of a differential drive robot given by
	\begin{align}
		\begingroup
		\renewcommand*{\arraystretch}{1.25}
		\!\!
		\begin{bmatrix}
			\dot{q}_1(t) \\
			\dot{q}_2(t) \\
			\dot{q}_3(t) \\
			\dot{p}_1(t) \\
			\dot{p}_2(t)
			\end{bmatrix} \!\!=\!\! \begin{bmatrix}
			\frac{\cos\left( q_3(t)\right) p_1(t)}{m} \\
			\frac{\sin\left( q_3(t)\right) p_1(t)}{m} \\
			\frac{p_2(t)}{J+ml^2} \\
			\frac{-r_1 p_1(t)}{m} - \frac{ml p_2^2(t)}{\left(J+ml^2\right)^2} + u_1(t) + u_2(t) \\
			\frac{\left(lp_1(t)-r_2\right)p_2(t)}{J+ml^2} + a u_1(t) - a u_2(t)
		\end{bmatrix},
		\endgroup
	\end{align}
	where \(r_1 = 1\), \(r_2 = 1\), \(a = 0.5\), \(m = 5\), \(J = 2\), \(l = 0.15\), \(u_1(t)\) is the force applied to the left wheel, \(u_2(t)\) is the force applied to the right wheel, and the state vector \(x(t) = \left[{q}_1(t), {q}_2(t), {q}_3(t), {p}_1(t), {p}_2(t)\right]^\Transp\) consists of x-position, y-position, heading, linear momentum, and angular momentum states, respectively. 

	A \SI{50}{\second} simulated trajectory is generated using an ODE solver disturbed by noise sampled from \(	\mathcal{N}\left(0,Q\right) \) where \(Q = \text{diag}\left(\num{0.001}, \num{0.001}, \num{1.745e-3}, \num{0.001}, \num{0.001}\right)\) at \SI{0.1}{\second} intervals. Measurements at each interval are obtained according to
	\begin{subequations}
		\begin{align}
			y_k &= \left[{q}_1(t), {q}_2(t), {q}_3(t)\right]^\Transp + \mn_k, \\
			\mn_k &\sim \mathcal{N}\left(0,R\right),
		\end{align} 
	\end{subequations}
	where \( R = \text{diag}\left(0.1^2, 0.1^2, 0.0349^2\right)\).

	The problem of state estimation in the presence of model mismatch is now examined, using an Euler discretisation for each \SI{0.1}{\second} interval. Whilst the Euler discretisation introduces some model mismatch, the mismatch is primarily due to performing the state estimation with the process noise incorrectly given by \(Q = \text{diag}\left(0.01^2, 0.01^2, 0.0035^2, 0.01^2, 0.01^2\right) \) and \(m\), \(J\), and \(l\) randomly given according to
	\begin{align*}
		m \sim \mathcal{U}\left(15,20\right), \quad  J \sim \mathcal{U}\left(0.01,1\right),  \quad l \sim \mathcal{U}\left(0.01,0.5\right).
	\end{align*}

	Due to the model mismatch, this experiment is a sensitivity analysis of the proposed filtering and smoothing methods, compared to alternative methods, with respect to the provided model parameters. For this experiment, the desired outcome for the state estimation approaches is a low sensitivity to the provided parameter estimates, i.e. the state estimation method is not excessively dependent on the provided transition model parameters. This property is important as, for all practical applications, some level of model mismatch is unavoidable and estimates of the model parameters are always used.

	As the measurement model is linear, typical iterated Kalman based filters that separate the prediction and correction steps, such as the IPLF, simplify to a UKF. Contrarily, the proposed approach, which does not have separate prediction and correction steps, remains capable of iterating for this system. While alternative filtering approaches, such as the \(L\)-scan IPLF \citep{GarciaFernandez2017}, which performs smoothing over the last \(L\) measurements, exist and could perform iterations despite the linear measurement model, they are outside the considered scope. Only recursive filters that, given a prior from the previous time step and a single observed measurement, produce a posterior are considered.

	\subsubsection{Filtering} \label{SE:sec:robot_filtering}
		The filtering problem using one set of observed measurements and 500 random sets of mismatched parameters sampled as detailed above are examined. The proposed filter, a fully adapted auxiliary particle filter (FA-APF) \citep{Pitt1999} with \num{5000} particles, and a UKF/IPLF (identical for this system) are considered.
		
		Table~\ref{SE:tab:robot_filter_summary} summarises the results of this experiment and shows the UKF/IPLF--with a square root implementation--was found to be the most sensitive with respect to the provided	model parameters, diverging a high percentage of the time compared to both the FA-APF and the proposed filter. Note that, as the method is deterministic, a diverged result for the UKF/IPLF cannot be remedied by running the method again. Fig.~\ref{SE:fig:robot_filter_heading} shows the estimated heading means of all methods for each successful run of the proposed filter. Despite both methods only diverging once, this figure shows that the state estimates produced by the FA-APF are much less consistent than the proposed method. Fig.~\ref{SE:fig:average_robot_filter_iters} shows that the average number of iterations required for convergence at each time step has remained small for the proposed method.
		\begin{table}[t]
			\caption{Percentage of diverged results and runtimes of the filters.}\label{SE:tab:robot_filter_summary}
			\centering
			\begin{tabular}{lcc}
				\toprule
				Method   & Diverged (\%) & Mean Runtime (s) \\ \midrule
				Proposed &   \num{0.2}   &   \num{3.379}    \\
				FA-APF   &   \num{0.2}   &   \num{1.272}    \\
				UKF/IPLF &   \num{56}    &   \num{0.024}    \\ \bottomrule
			\end{tabular}
		\end{table}
		\begin{figure}[!t]
			\centering
			\includegraphics{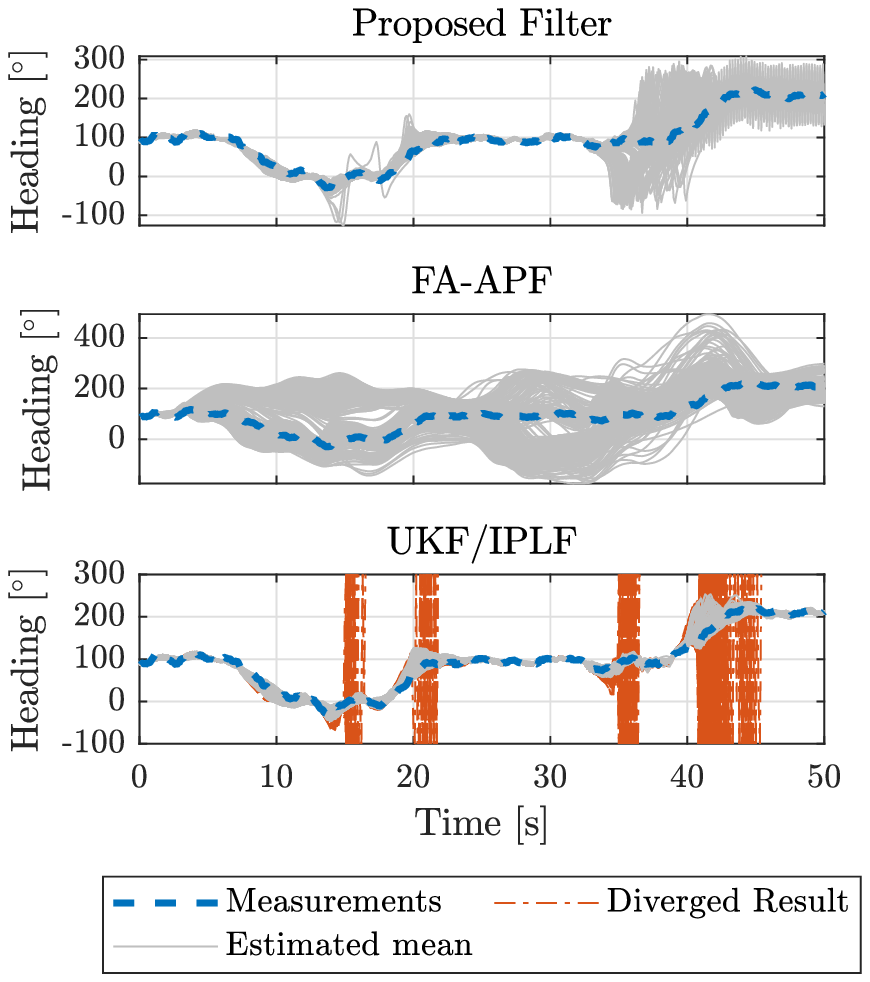}
			\caption[Estimated heading mean for each successful run of the proposed filter and the corresponding results for the FA-APF with \num{5000} particles and the UKF/IPLF.]{Estimated heading mean for each successful run of the proposed filter and the corresponding results for the FA-APF with \num{5000} particles and the UKF/IPLF.}
			\label{SE:fig:robot_filter_heading}
		\end{figure}
		\begin{figure}[!t]
			\centering
			\includegraphics{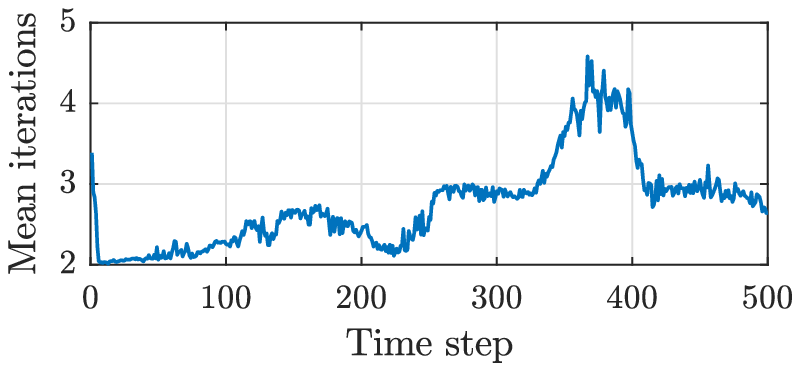}
			\caption[Average iterations for the proposed filtering method to converge.]{Average iterations for the proposed filtering method to converge.}
			\label{SE:fig:average_robot_filter_iters}
		\end{figure}
	
		In this example, we have shown that, at least for this system, the proposed filtering approach is significantly less sensitive than both the UKF/IPLF and a particle filter with respect to the provided model parameters.
		
	\subsubsection{Smoothing}\label{SE:sec:robot_smoothing}
		Smoothing on the first 200 trials considered for the filtering example is now considered using both the proposed smoother and the IPLS with one filter iteration and 50 smoothing iterations, denoted IPLS-1-50. To highlight the flexibility of the proposed smoother the state means are initialised using the observed measurements for the first three states and zeros for momentum states. The joint state covariance for each time step is initially given by \( 0.01^2 I_{10}\). 
		
		The IPLS was unsuccessful 109 times of the 200 attempts, this is due to the first step of the IPLS consisting of a UKF which diverged. Contrarily, the proposed smoother was successful in every run, including the one set of parameters that caused the filter to diverge. Fig.~\ref{SE:fig:smoothed_heading_proposed_200} shows the heading mean of each of the 200 runs. As expected, these results are both closer to the observed measurements and more consistent than the estimates produced when filtering in Fig.~\ref{SE:fig:robot_filter_heading}. The proposed smoother required a median of 69 optimisation iterations, taking a mean of \SI{35.68}{\second}.
		\begin{figure}[!t]
			\centering
			\includegraphics{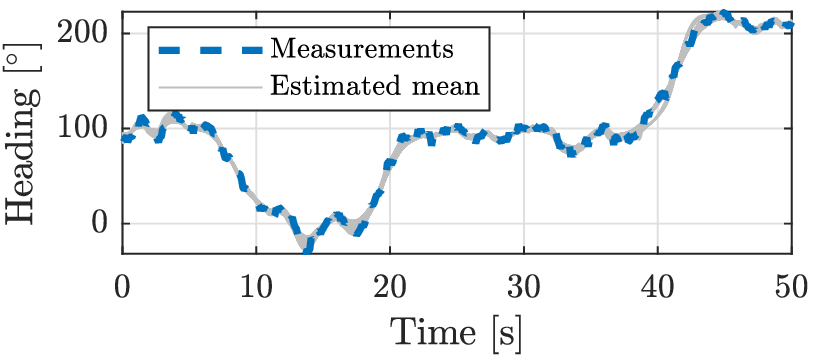}
			\caption[Heading mean for each run of the proposed smoother.]{Heading mean for each of the 200 runs of the proposed smoother.}
			\label{SE:fig:smoothed_heading_proposed_200}
		\end{figure}
		
		The 81 successful runs of the IPLS, however, required an average of \SI{3.513}{\second} and produced similar smoothed states as the proposed method. Due to the linear measurement model, this similarity is unsurprising. The exception is the states for the first few seconds; Fig.~\ref{SE:fig:smoothed_state_initial_compare} shows the smoothed means of each state and illustrates the proposed smoother delivered estimates that are more consistent and reasonable.
		
		The handling of the state prior, \( p(x_0) \), is believed to be the cause of this difference. The IPLS, and indeed all forward-backwards RTS style smoothers, begin with a prediction from the prior for every iteration. This is regardless of the potentially significant difference between the smoothed state estimate for the initial time step and the state prior. While producing the exact smoothing solution for linear systems, there does not appear to be any justification for this initial prediction for approximate assumed density smoothers.
		
		Contrarily, the proposed smoother handles the state prior as an expectation using the estimated smoothed state at the initial time step; at no point is a prediction from the state prior performed. This different handling of the prior is how the proposed filtering approach can iterate over both the process and measurement models compared to just the measurement model. It is also worth noting that, for this example, the mean of the state prior matched the true initial state. In situations where the state prior is far from the smoothed state at the initial time step, this improved handling of the prior would become more pronounced compared with methods that predict from the prior.
		\begin{figure}[!t]
			\centering
			\includegraphics{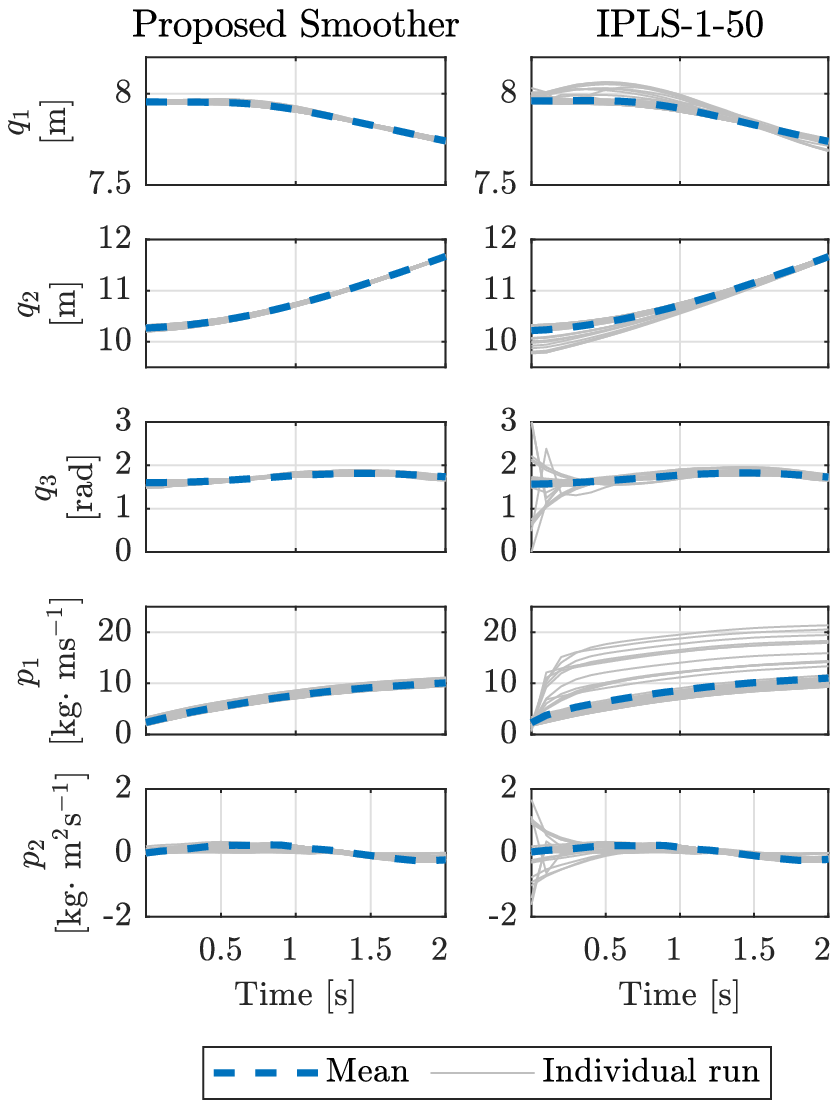}
			\caption[Comparison of the smoothed state means of the initial time steps between the proposed smoother and the IPLS]{Comparison of the smoothed state means of the initial time steps between the proposed smoother (left) and the IPLS-1-50  approach (right), showing that the proposed algorithm is more consistent. Each successful trial is shown in grey and the mean of the successful trials shown in blue.}
			\label{SE:fig:smoothed_state_initial_compare}
		\end{figure}
		
		In this section, the effectiveness of the proposed approach on a multi-dimensional nonlinear system has been demonstrated and the reduced sensitivity with repesct to the provided model parameters shown. While requiring more runtime than alternatives such as the IPLS, the computational cost is not excessive and requires only a moderate amount of iterations to converge. Perhaps more importantly, particularly for the smoothing scenario typically performed offline, is the proposed smoother appears to be more robust and more consistent than alternative methods.
	

\subsection{Target Tracking}\label{SE:sec:target_tracking_example}
	In this example, we consider a 2-D target tracking problem with a four-dimensional state vector, \( x = [p_x, v_x, p_y, v_y]^\Transp\), consisting of \(x\) and \(y\) position and velocity states. This system is observed using three bearing-only sensors, each modelled using a von Mises-Fisher (VMF) distribution. The process model for this system is given by
	\begin{subequations}
		\begin{align}
			x_{k+1} &= \left(I_2 \otimes \begin{bmatrix}
				1 & \tau \\	
				0 & 1
			\end{bmatrix}\right) x_k + \pn_k, \\
			\pn_k &\sim \mathcal{N}\left(0,q I_2 \otimes \begin{bmatrix}
				\frac{\tau^3}{3} & \frac{\tau^2}{2} \\
				\frac{\tau^2}{2}  & \tau
			\end{bmatrix} \right),
		\end{align}
	\end{subequations}
	where \(q = 0.25\), \(\tau = 0.5\), and the initial state prior is given by
	\begin{align}
		p(x_0) &= \mathcal{N}\left(x_0; \mu_p,\Sigma_p\right),
	\end{align}
	where \( \mu_p = [ -100, 7, 0, 5 ]^\Transp\) and \(\Sigma_p = \text{diag}\left(20^2, 1, 1, 1\right)\). The true initial state for each simulation is sampled from this prior. The measurement model for the \(i^{\text{th}}\) sensor is given by
	\begin{subequations}
		\begin{align}
			y^i &\sim \mathcal{V}\left(h^i(x),\kappa\right), \\
			h^i(x) &= \frac{\left[p_x - s_x^i, p_y - s_y^i\right]^\Transp}{ || \left[p_x - s_x^i, p_y - s_y^i\right] ||},
		\end{align}
	\end{subequations}
	where \(\mathcal{V}\left(h^i(x),\kappa\right)\) is a VMF distribution with a concentration parameter of \(\kappa = 200\) and \(s_x^i\), \(s_y^i\) are the \(x\) and \(y\) positions of the \(i^{\text{th}}\) sensor, respectively. The three sensors are located at \(\left[100, 0\right]^\Transp\), \(\left[0, -100\right]^\Transp\), and \(\left[0, 150\right]^\Transp\). Throughout this section, we consider 100 time steps of this system, an example path with the sensor locations marked is shown in Fig.~\ref{SE:fig:vmf_target_tracking_filter}.
	\begin{figure}[!t]
		\centering
		\includegraphics{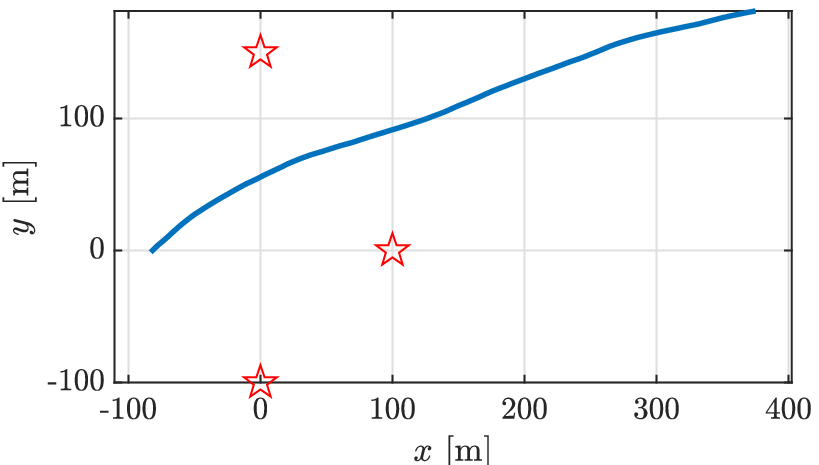}
		\caption[Example path used in the target tracking problem.]{Example path (blue line) used in the target tracking problem with the sensor locations marked by red stars.}
		\label{SE:fig:vmf_target_tracking_filter}
	\end{figure}
	
	This target tracking problem is taken from \citep{GarciaFernandez2019}, where, utilising explicit knowledge of moments of the VMF distributions, the IPLF has been extended to produce the VMF-IPLF which has outperformed a range of alternative Gaussian assumed density filters. We will compare against the sigma point version of the VMF-IPLF with \(i\) iterations which we have denoted as VMF-IPLF-i. 
	
	To assess the performance of the VMF-IPLF, and the proposed filtering approach, a bootstrap particle filter (BPF) \citep{Gordon1993} with many particles is used for comparison. To determine an appropriate quantity of particles, we first perform the BPF 20 times on the same dataset with the number of particles ranging from \num{5e3} to \num{1e6}. Fig.~\ref{SE:fig:vmf_ll_diff} is a box-plot showing the difference between the log-likelihood calculated from the BPF and the approximate lower bound from the proposed filter for each trial. These results highlight two properties; firstly, the log-likelihood calculated using the BPF and \(\hat{\mathcal{L}}\left(\beta^\star\right)\) is close, showing that the lower bound is relatively tight. Secondly, as the particle counts continue to increase, the improvements of the variance in the BPF start becoming minor at the \num{500000} particle mark.
	\begin{figure}[!t]
		\centering
		\includegraphics{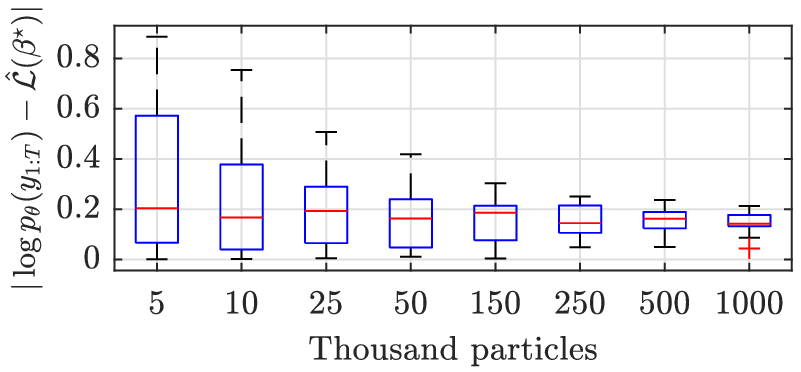}
		\caption[Difference between log-likelihood calculated using a bootstrap particle filter and the proposed filter.]{Difference between log-likelihood calculated using a bootstrap particle filter and the proposed filter on a single dataset for a variety of particle counts each performed 20 times.}
		\label{SE:fig:vmf_ll_diff}
	\end{figure}
	
	Box-plots showing the position error of the filtered mean of the proposed filter and the VMF-IPLF from the BPF mean for each trial are shown in Fig.~\ref{SE:fig:vmf_pos_error} and Fig.~\ref{SE:fig:vmf_pos_error_iplf}, respectively. These results show that the proposed filter is closer to the particle solution with many particles than the VMF-IPLF.
	\begin{figure}[!t]
		\centering
		\includegraphics{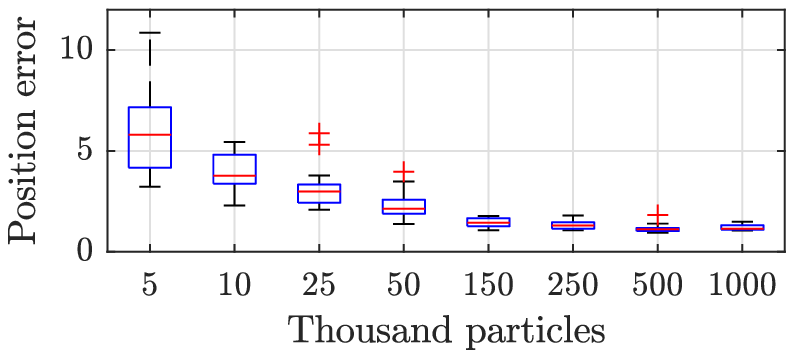}
		\caption[Position error between the state means calculated using a bootstrap particle filter and the proposed filter.]{Position error between the state means calculated using a bootstrap particle filter and the proposed filter on a single dataset for a range of particle counts each performed 20 times.}
		\label{SE:fig:vmf_pos_error}
	\end{figure}
	\begin{figure}[!t]
		\centering
		\includegraphics{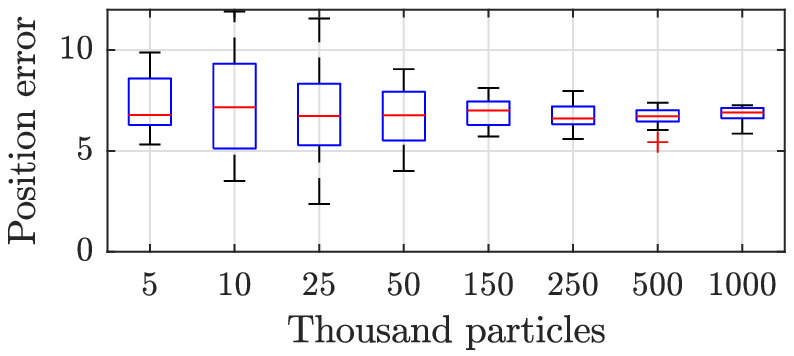}
		\caption[Position error between the state mean calculated using a bootstrap particle filter and the VMF-IPLF.]{Position error between the state means calculated using a bootstrap particle filter and the VMF-IPLF on a single dataset for a range of particle counts each performed 20 times.}
		\label{SE:fig:vmf_pos_error_iplf}
	\end{figure}
	
	Fig.~\ref{SE:fig:vmf_pos_error_500} shows this position error for 500 different state trajectories using \num{500000} particles for the BPF and 50 iterations of the VMF-IPLF. The consistently lower error from the particle solution shows the proposed method has outperformed the VMF-IPLF. As Fig.~\ref{SE:fig:vmf_target_tracking_filter_iters} shows, the average number of iteration required to achieve this is quite low. The median runtimes to calculate the filtered state trajectories over the 100 time steps for the VMF-IPLF-50 and the proposed filter are \SI{705}{\milli\second} and \SI{621}{\milli\second}, respectively; these runtimes are close to the time required for the BPF with \num{10000} particles.
	\begin{figure}[!t]
		\centering
		\includegraphics{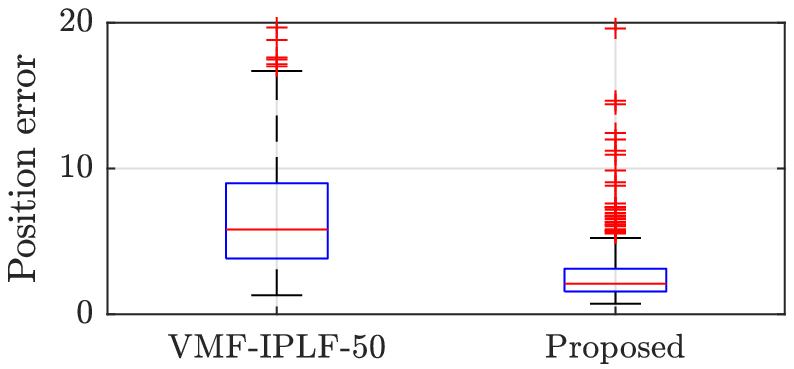}
		\caption[Position error between a BPF solution, the VMF-IPLF, and the proposed approach.]{Position error between a BPF solution using \num{500000} particles, the VMF-IPLF, and the proposed approach over 500 differing state trajectories.}
		\label{SE:fig:vmf_pos_error_500}
	\end{figure}
	\begin{figure}[!t]
		\centering
		\includegraphics{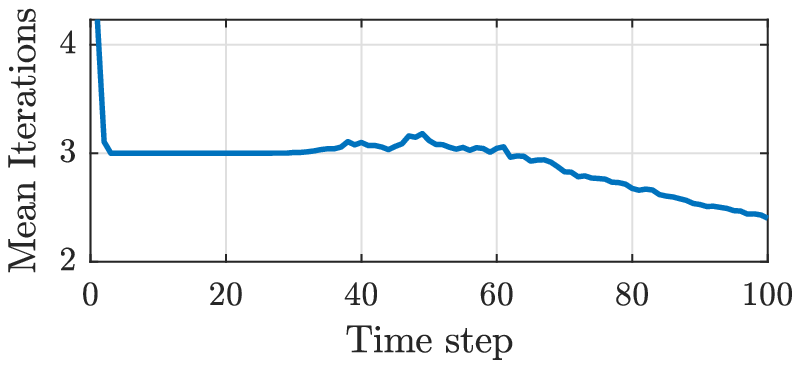}
		\caption[Average iterations required for the proposed filter to converge.]{Average iterations required for the proposed filter to converge for each time step of the 500 differing trajectories.}
		\label{SE:fig:vmf_target_tracking_filter_iters}
	\end{figure}
	
	In this section, the proposed state estimation approach is demonstrated on a target tracking example, modelled with a von Mises-Fisher distribution. This non-Gaussian distribution did not require any alterations to the proposed approach to state estimation beyond modifying the log-likelihood calculation. Despite this, we have outperformed an existing alternative, tailored to a von Mises-Fisher distribution, which itself outperformed a range of assumed density Gaussian filters. This is achieved without significant computational cost, required a minimal number of iterations to converge, and, generally, produced results very close to a particle filter with \num{500000} particles.


\section{Conclusion} \label{SE:sec:conculsion}
	In this paper, we have presented an optimisation based approach to the problem of assumed Gaussian state estimation for nonlinear and non-Gaussian state-space models using variational inference in the context of both filtering and smoothing. The resulting optimisation problems, of standard form, are efficiently solved using exact first- and second-order derivatives. Several numerical examples have been provided to demonstrate the performance and robustness of the proposed approach compared to alternative methods. An interesting continuation is to make use of the proposed approximation within a sequential Monte Carlo sampler to also get theoretical guarantees.

\bibliographystyle{IEEEtran}
\bibliography{VI_state_estimation_tsp} 

\appendix


\section{Proof of Lemma~\ref{SE:lem:optimal_q_factorises}} \label{SE:app:proof of factorisation lemma}
	\begin{proof}	
		The optimal \(\beta\) is given as
		\begin{align} 
			\beta^\star &= \arg\max_\beta \quad \mathcal{L}(\beta),
		\end{align}
		which (recalling the definitions for \(\beta_1\) and \(\beta_2\) given in Assumption~\ref{SE:ass:suffienct flexibity}), can equivalently be written as
		\begin{align}
			\beta_1^\star,\beta_2^\star	&= \arg\max_{\beta_1,\beta_2} \quad g_1\left(\beta_1\right) + g_2\left(\beta_1,\beta_2\right),
		\end{align}
		where
		\begin{subequations}
			\begin{align}
				g_1\left(\beta_1\right) &=  I_1\left(\beta_1\right) + I_2\left(\beta_1\right) + I_3\left(\beta_1\right) \notag \\
				&\quad - \int  q_{\beta_1}(x_{0}) \log  q_{\beta_1}(x_{0})  dx_{0}, \\
				g_2\left(\beta_1,\beta_2\right) &= - \sum_{k=1}^{T} \int  q_\beta(x_{0:k}) \log   q_\beta(x_{k} \mid x_{k-1}, x_{0:k-2}) dx_{0:k}.
			\end{align}
		\end{subequations}
		Therefore, \(\beta_2^\star\) can also be given as
		\begin{align}
			\beta_2^\star	&= \arg\max_{\beta_2} \quad  g_2\left(\beta_1^\star,\beta_2\right).
		\end{align}
		
		However, using Lemma~\ref{SE:lem:conditional diff entropy}, we have
		\begin{align} \label{SE:eq:conditional diff entropy inequality}
			& -\int  q_\beta(x_{0:k}) \log  q_\beta(x_{k} \mid x_{k-1}, x_{0:k-2}) dx_{0:k} \notag \\
			& \quad \le -\int  q_\beta(x_{0:k}) \log   q_\beta(x_{k} \mid x_{k-1}) dx_{0:k}. 
		\end{align}
		Due to Assumption~\ref{SE:ass:suffienct flexibity}, there exists a \(\beta_2\) such that
		\begin{align}
			q_{\beta_1^\star,\beta_2}(x_{k} \mid x_{k-1}, x_{0:k-2}) =  q_{\beta_1^\star,\beta_2}(x_{k} \mid x_{k-1}),
		\end{align}
		For \(\beta_2 = \beta_2^\star\), we therefore have
		\begin{align}
			q_{\beta^\star}(x_{k} \mid x_{k-1}, x_{0:k-2}) =  q_{\beta^\star}(x_{k} \mid x_{k-1}),
		\end{align}
		as, via \eqref{SE:eq:conditional diff entropy inequality}, \emph{any} other selection for \(\beta_2\) is not optimal.
		
		Therefore, \(q_{\beta^\star}\left(x_{0:T}\right)\) is given by
		\begin{align}
			q_{\beta^\star}\left(x_{0:T}\right) = q_{\beta^\star}(x_0) \prod_{k=1}^{T} q_{\beta^\star}(x_{k} \mid x_{k-1})
		\end{align}
	\end{proof}

	\begin{lemma}[Conditional Differential Entropy] \label{SE:lem:conditional diff entropy}
		Given the differential entropy defined by
		\begin{align}
			h(X) &= -\int p(x) \log p(x) dx,
		\end{align}
		and the conditional differential entropy defined by
		\begin{align}
			h(X|Y) &= - \int p(x,y) \log p(x|y) dx dy, 
		\end{align}
		it is known that,
		\begin{align}
			h(X |Y) \le h(X),
		\end{align}
		with equality if and only if $X$ and $Y$ are independent.
	\end{lemma}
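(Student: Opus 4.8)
The plan is to prove this standard information-theoretic inequality by expressing the gap $h(X) - h(X|Y)$ as a Kullback--Leibler divergence between the joint density $p(x,y)$ and the product of marginals $p(x)\,p(y)$ (i.e. the mutual information of $X$ and $Y$), and then invoking its non-negativity.

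First I would rewrite the marginal entropy using the joint density. Since $\int p(x,y)\,dy = p(x)$, we have $\int p(x)\log p(x)\,dx = \int p(x,y)\log p(x)\,dx\,dy$, so that
\begin{align}
	h(X) - h(X|Y) &= \int p(x,y) \log \frac{p(x|y)}{p(x)}\,dx\,dy.
\end{align}
Substituting $p(x|y) = p(x,y)/p(y)$ then identifies the right-hand side as
\begin{align}
	h(X) - h(X|Y) = \int p(x,y) \log \frac{p(x,y)}{p(x)\,p(y)}\,dx\,dy = \text{KL}[\,p(x,y) \mid\mid p(x)\,p(y)\,].
\end{align}

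The second step is to apply the non-negativity of the KL divergence (Gibbs' inequality), which follows directly from Jensen's inequality applied to the concave logarithm: writing the divergence as $-\int p(x,y)\log( p(x)p(y)/p(x,y) )\,dx\,dy$ and using the elementary bound $\log t \le t - 1$ yields a lower bound of $0$. This immediately gives $h(X|Y) \le h(X)$, as required.

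For the equality condition I would appeal to the strict concavity of the logarithm, equivalently the fact that $\log t = t-1$ holds only at $t = 1$. Equality in the Jensen/Gibbs step therefore forces $p(x,y) = p(x)\,p(y)$ for almost every $(x,y)$, which is precisely the statement that $X$ and $Y$ are independent; conversely, independence makes the integrand vanish identically, so the divergence is zero. The main obstacle is technical rather than conceptual: making the equality case fully rigorous requires some care about the supports of the densities (the points where $p(x)p(y)$ vanishes) and a precise invocation of the almost-everywhere equality condition in Gibbs' inequality, but no genuinely new idea is needed beyond the strict form of Jensen's inequality.
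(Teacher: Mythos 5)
Your proof is correct. The paper does not actually prove this lemma itself---it simply cites Theorem 8.6.1 of Cover and Thomas---and your argument (writing $h(X) - h(X|Y)$ as the mutual information $\text{KL}[\,p(x,y) \mid\mid p(x)\,p(y)\,]$ and invoking Gibbs' inequality via $\log t \le t-1$, with the strict form giving the independence equality case) is precisely the standard proof behind that citation, so it matches the paper's approach in substance while being more self-contained.
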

	\begin{proof}
		See Theorem 8.6.1 of \cite{Cover2005}.
	\end{proof}


\section{Proof of Lemma~\ref{SE:lem:entropy_decomposition}} \label{SE:app:entropy_decomposition}
	\begin{proof}		
		For the purpose of finding each optimal pairwise distribution \(q_{\beta^\star}\left(x_{k-1},x_{k}\right)\), it can be assumed that \(q_\beta\left(x_{0:T}\right)\) factorises according to 
		\begin{align} 
			q_{\beta}\left(x_{0:T}\right) = q_{\beta}(x_0) \prod_{k=1}^{T} q_{\beta}(x_{k} \mid x_{k-1}).
		\end{align}
		Due to Lemma~\ref{SE:lem:optimal_q_factorises}, this assumption merely limits the resultant optimisation problem to search of the optimal subfamily of \(q_\beta\left(x_{0:T}\right)\).
		
		Using this assumption, and directly based on \citep{Vrettas2008} with the notation modified to align with this paper, the entropy term can be decomposed as follows.
		\begin{align*}
			I_4\left(\beta\right) 	&= \int q_\beta(x_{0:T}) \log q_\beta(x_{0:T}) dx_{0:T} \\
									&= \int q_\beta(x_{0:T}) \log  \left( q_\beta(x_0) \prod_{k=1}^{T} q_\beta(x_{k} \mid x_{k-1}) \right) dx_{0:T} \\
									&= \int q_\beta(x_{0:T}) \log  q_\beta(x_0) dx_{0:T} \\
									&\quad + \sum_{k=1}^{T}  \int q_\beta(x_{0:T}) \log q_\beta(x_{k} \mid x_{k-1}) dx_{0:T}  \\
									&= \int q_\beta(x_{0:T}) \log  q_\beta(x_0) dx_{0:T} \\
									&\quad + \sum_{k=1}^{T}  \int q_\beta(x_{0:T}) \log \frac{q_\beta(x_{k-1},x_{k})}{q_\beta(x_{k-1})} dx_{0:T} \\
									&= \int q_\beta(x_{0:T}) \log  q_\beta(x_0) dx_{0:T} \notag \\ &\quad+ \sum_{k=1}^{T}  \int q_\beta(x_{0:T}) \log q_\beta(x_{k-1},x_{k}) dx_{0:T} \\
									&\quad - \sum_{k=1}^{T}  \int q_\beta(x_{0:T}) \log q_\beta(x_{k-1}) dx_{0:T}  \\
									&= \sum_{k=1}^{T}  \int q_\beta(x_{0:T}) \log q_\beta(x_{k-1},x_{k}) dx_{0:T} \\
									&\quad - \sum_{k=2}^{T}  \int q_\beta(x_{0:T}) \log q_\beta(x_{k-1}) dx_{0:T}  \\
									&= \sum_{k=1}^{T}  \int q_\beta(x_{k-1:k}) \log q_\beta(x_{k-1},x_{k}) dx_{k-1:k} \\
									&\quad - \sum_{k=1}^{T-1} \int q_\beta(x_{k}) \log q_\beta(x_k) dx_{k}.
		\end{align*}
	\end{proof}


\section{Proof of Lemma~\ref{SE:lem:sufficient flexiby of a gaussian}} \label{SE:app:proof of sufficient flexiby of a gaussian}
	\begin{proof}
		Without loss of generality, for notational convenience, it is assumed that \(T = 2\). The full assumed density is, therefore, given by 
		\begin{align}
			q_\beta\left(x_{0:T}\right) = 
			\mathcal{N}\left( \begin{bmatrix}
				x_0 \\
				x_1 \\
				x_2
			\end{bmatrix}
			;
			\begin{bmatrix}
				\mu_0 \\
				\mu_1 \\
				\mu_2
			\end{bmatrix},
			\begin{bmatrix}
				\Sigma_{11} & \Sigma_{12} & \Sigma_{13} \\	
				\Sigma_{12}^\Transp & \Sigma_{22} & \Sigma_{23} \\
				\Sigma_{13}^\Transp & \Sigma_{23}^\Transp & \Sigma_{33} 
			\end{bmatrix}
			\right)
		\end{align}
		where \(\beta_1 = \{ \mu_0, \mu_1,\mu_2, \Sigma_{11}, \Sigma_{12}, \Sigma_{22}, \Sigma_{23}, \Sigma_{33} \}\), \(\beta_2 = \Sigma_{13}\), and \(\beta = \{\beta_1,\beta_2\}\).
		
		By inspection, it is clear that \(\beta_2\) does not influence the pairwise marginal distributions. It remains to show that, for a given \(\beta_1\), a \(\beta_2\) exists such that
		\begin{align}
			q_{\beta_1,\beta_2}(x_{2} \mid x_{1}, x_{0}) =  q_{\beta_1,\beta_2}(x_{2} \mid x_{1}).
		\end{align}
		For the Gaussian considered, this occurs when
		\begin{align}
			\Sigma_{33} - \begin{bmatrix}
				\Sigma_{13} \\
				\Sigma_{23}
			\end{bmatrix}^\Transp
			\begin{bmatrix}
				\Sigma_{11} 		& \Sigma_{12}\\	
				\Sigma_{12}^\Transp & \Sigma_{22} \\
			\end{bmatrix} ^{-1} 
			\begin{bmatrix}
				\Sigma_{13} \\
				\Sigma_{23}
			\end{bmatrix} = \Sigma_{33} - \Sigma_{23}^\Transp \Sigma_{22}^{-1} \Sigma_{23}.
		\end{align}
		It, therefore, suffices to show a \(\Sigma_{13}\) can be found such that
		\begin{align}
			\begin{bmatrix}
				\Sigma_{13} \\
				\Sigma_{23}
			\end{bmatrix}^\Transp
			\begin{bmatrix}
				\Sigma_{11} 		& \Sigma_{12}\\	
				\Sigma_{12}^\Transp & \Sigma_{22} \\
			\end{bmatrix} ^{-1} 
			\begin{bmatrix}
				\Sigma_{13} \\
				\Sigma_{23}
			\end{bmatrix} = \Sigma_{23}^\Transp \Sigma_{22}^{-1} \Sigma_{23}.
		\end{align}
		As this occurs for
		\begin{align}
			\Sigma_{13} = \Sigma_{12} \Sigma_{22}^{-1} \Sigma_{23},
		\end{align}
		which always exists as \(\Sigma_{22}\) is a covariance, this both completes the proof and shows how the full Gaussian density can be found using only the pairwise marginals. For \(T>2\), the described process is applied repeatably with appropriate redefinitions of \(\Sigma_{12}\) to suit the increasing problem dimension.
	\end{proof}


\section{Calculation of \(\hat{\mathcal{L}}\left(\beta\right)\)} \label{SE:app:full_log_bound_approx_eqn}
	In this section, the full equations to calculate \(\hat{\mathcal{L}}\left(\beta\right)\) are given assuming a Gaussian prior of \(\mathcal{N}\left(x_0; \mu_p, P_p\right)\).
	\begin{subequations}
		\begin{align}
			\hat{\mathcal{L}}\left(\beta\right) &= I_1\left(\beta\right) + \hat{I}_{2}\left(\beta\right) + \hat{I}_{3}\left(\beta\right) - I_4\left(\beta\right),
		\end{align} 
		where
		\begin{align}
			I_1 \left(\beta\right)        & = -\frac{n_x}{2}\log 2\pi - \sum_{i=1}^{n_x} \log P_p^{\frac{1}{2}}(i,i)  - \frac{1}{2}\trace\left(P_p^{-1} A_1^\Transp A_1\right) \notag                     \\
			                              & \quad - \frac{1}{2}\left( P_p^{-\frac{\Transp}{2}}\left(\mu_p - \mu_1\right)\right)^\Transp \left( P_p^{-\frac{\Transp}{2}}\left(\mu_p - \mu_1\right)\right), \\
			\hat{I}_{2}\left(\beta\right) & =  \sum_{k=1}^T \sum_{j = 1}^{n_s} w_j  \log p_\theta(\bar{x}_{k}^j \mid x_{k-1}^j),                                                                                \\
			\hat{I}_{3}\left(\beta\right) & =  \sum_{k=1}^T \sum_{j = 1}^{n_s} w_j  \log p_\theta( y_{k} \mid \bar{x}_{k}^j),                                                                                   \\
			I_4\left(\beta\right)         & = -\frac{\left(T+1\right)n_x}{2}\log 2\pi - \frac{\left(T+1\right)n_x}{2}         	\notag                                                                     \\
			                              & \quad - \sum_{k=1}^{T} \sum_{i=1}^{n_x} \log C_k(i,i) - \sum_{i=1}^{n_x} \log A_1(i,i).
		\end{align}
	\end{subequations}
	Here, the notation \(X(i,i)\) refers to the \(i^{\text{th}}\) diagonal element of~\(X\).	

\end{document}